\RequirePackage[svgnames]{xcolor}

\documentclass[11pt,letterpaper]{mystyle}
\usepackage{amsmath}
\usepackage{amssymb}
\usepackage[all]{hypcap}
\usepackage[svgnames]{xcolor}
\usepackage[comma,authoryear,compress]{natbib}
\usepackage{hyperref}[citecolor=lightblue]

\hypersetup{
    colorlinks = true,
    citecolor = {violet},
}

\usepackage{algorithm}
\usepackage{algorithmicx}
\usepackage{algpseudocode}
\usepackage{microtype}
\usepackage{graphicx}
\expandafter\def\csname ver@subfig.sty\endcsname{}
\usepackage{booktabs} %
\usepackage{float}
\usepackage{bigstrut}

\usepackage{mathtools}
\usepackage{amsthm}
\usepackage{mathrsfs}
\usepackage{nicefrac}
\usepackage{dsfont}
\usepackage{enumitem}
\usepackage{subcaption}
\usepackage{graphicx,subfig}
\usepackage{cleveref}
\usepackage{bxcoloremoji}
\usepackage{datetime2}
\usepackage{float}
\usepackage{physics}

\usepackage[utf8]{inputenc} %
\usepackage[T1]{fontenc}    %
\usepackage{hyperref}       %
\usepackage{url}            %
\usepackage{booktabs}       %
\usepackage{amsfonts}       %
\usepackage{nicefrac}       %
\usepackage{microtype}      %
\usepackage{graphicx}
\usepackage{subcaption} 
\usepackage{wrapfig}
\usepackage{lipsum}
\usepackage{enumitem}
\usepackage{stackengine}
\usepackage[font=small,labelfont=bf]{caption}
\usepackage{color}
\usepackage{adjustbox}
\usepackage{multirow, multicol}

\usepackage{rotating}
\usepackage{makecell}

\usepackage{tcolorbox}
\tcbuselibrary{listings,skins,breakable}
\usepackage{listings}
\usepackage{subcaption}

\newcommand{\x}{\mathbf{x}}

\DeclareMathOperator*{\E}{\mathbb{E}}

\definecolor{blanchedalmond}{rgb}{1.0, 0.92, 0.8}
\definecolor{carmine}{rgb}{0.59, 0.0, 0.09}
\definecolor{lightblue}{rgb}{0.22,0.45,0.70}%

\newtheorem{theorem}{Theorem}[section]

\renewcommand{\mathbf}{\boldsymbol}

\makeatletter
\def\Ddots{\mathinner{\mkern1mu\raise\p@
\vbox{\kern7\p@\hbox{.}}\mkern2mu
\raise4\p@\hbox{.}\mkern2mu\raise7\p@\hbox{.}\mkern1mu}}
\makeatother

\definecolor{amaranth}{rgb}{0.9, 0.17, 0.31}
\definecolor{antiquebrass}{rgb}{0.8, 0.58, 0.46}
\definecolor{antiquefuchsia}{rgb}{0.57, 0.36, 0.51}
\definecolor{chromeyellow}{rgb}{0.31, 0.47, 0.26}

\newtcolorbox{AIbox}[2][]{aibox,title=#2,#1}
\definecolor{lightgreen}{rgb}{0.22,0.70,0.30}%
\definecolor{Gray}{gray}{0.95}
\definecolor{Cornsilk}{rgb}{1.0, 0.97, 0.86}

\definecolor{lightblue}{HTML}{0064E0}
\definecolor{fg}{HTML}{1C2B33}
\definecolor{bg}{HTML}{F1F4F7}

\newcommand{\name}{\textsc{Rubric-ARROW}}

\providecommand{\Prob}{\mathbb{P}}
\providecommand{\norm}[1]{\left\lVert #1 \right\rVert}

\usepackage{fvextra}
\definecolor{promptpink}{RGB}{205,120,154}
\definecolor{promptbg}{RGB}{250,242,246}

\usepackage{amsmath}

\usepackage[all]{hypcap}
\title{\LARGE  {\name}: Alternating Pointwise Rubric Reward Modeling for LLM Post-training in Non-verifiable Domains}

\runningtitle{Alternating Pointwise Rubric Reward Modeling}

\author{
Haoxiang Jiang\textsuperscript{1} \quad
Zihan Dong\textsuperscript{2} \quad
Tianci Liu\textsuperscript{3} \quad
Wanying Wang\textsuperscript{4} \quad
Ran Xu\textsuperscript{5} \quad
Tony Yu\textsuperscript{6} \quad
Linjun Zhang\textsuperscript{2} \quad
Haoyu Wang\textsuperscript{1} \\
\textsuperscript{1}University at Albany \quad
\textsuperscript{2}Rutgers University \quad
\textsuperscript{3}Purdue University \quad
\textsuperscript{4}Independent Researcher \quad
\textsuperscript{5}Emory University \quad
\textsuperscript{6}Georgia Institute of Technology
}

\begin{document}

\begin{abstract}
Pointwise reward modeling offers critical signals for LLM post-training, yet struggles with absolute scoring in subjective, non-verifiable settings. Rubric-based methods address this by decomposing evaluation into explicit criteria, but existing approaches typically depend on frontier LLMs and suffer from ties caused by hard Boolean aggregation. We present \name, an alternating framework that jointly trains a rubric generator and a rubric-conditioned judge, with its RL stage using only pairwise preference data. Our method couples a probability-based scoring rule that reduces ties with phase-specific preference-based rewards and an alternating GRPO scheme that together train the pointwise evaluator. Extensive experiments show that \name\ achieves competitive reward-modeling accuracy and yields consistent gains for downstream policy post-training.
\vspace{2mm}

\textit{Keywords: Rubrics-as-Rewards, Reward Modeling, LLM Alignment}

\vspace{5mm}

\coloremojicode{1F4C5} \textbf{Date}: \today

\coloremojicode{1F917} \textbf{Model Weights \& Checkpoints}: \href{https://huggingface.co/collections/OpenRubrics/rubricarrow}{https://huggingface.co/collections/OpenRubrics/rubricarrow}

\coloremojicode{1F4DA} \textbf{Datasets}: \href{https://huggingface.co/datasets/OpenRubrics/RubricARROW-Judge-SFT}{https://huggingface.co/datasets/OpenRubrics/RubricARROW-Judge-SFT}

\coloremojicode{1F4E7} \textbf{Contact}: \href{mailto:hjiang2@albany.edu}{hjiang2@albany.edu};
\href{mailto:hwang28@albany.edu}{hwang28@albany.edu}

\end{abstract}

\maketitle
\vspace{3mm}
\section{Introduction}

Reward modeling is a core component of LLM post-training~\citep{stiennon2020learning,ouyang2022training,bai2022training}.
In particular, pointwise reward modeling is attractive, as they assign each response an independent scalar score~\citep{wang2024interpretable}, enabling efficient evaluation and direct policy optimization.
Yet in non-verifiable tasks such as open-ended instruction following and chat, a single holistic score is often unreliable~\citep{ying2025beyond,hashemi2024llm,lee2025checkeval,song2024finesure}. This has motivated rubric-based reward modeling, which decomposes evaluation into explicit criteria and aggregates criterion-level judgments into a pointwise reward score.

However, rubric-based reward modeling introduces a new challenge: a reliable evaluator requires not only high-quality criteria, but also a judge that can apply them consistently. A common solution is to prompt frontier LLMs to generate rubrics and judgments~\citep{viswanathan2026checklists,chen2025judgelrm,gunjal2025rubrics,liu2023g}. While effective, this paradigm largely outsources evaluation to external models, making reward estimation expensive, difficult to control, and hard to deploy as a standalone reward model.

Recent learning-based approaches take steps toward trainable rubric evaluators, but they remain limited in three important ways. First, several methods train either the rubric generator or the judge, without jointly optimizing the two components altogether~\citep{liu2025openrubrics,rezaei2025online}. Second, many approaches rely on frontier LLMs as the judge to provide supervision for reward modeling~\citep{zhou2025breaking,li2026rubrichub,he2025advancedif}, preserving the dependence on external evaluators. Third, the supervision most commonly available for alignment is pairwise preference data, which does not directly specify either absolute pointwise scores or criterion-level labels. This mismatch makes it unclear how to train a rubric-conditioned pointwise reward model from preferences alone. Compounding this issue, hard Boolean aggregation over rubric criteria often collapses distinct responses into tied scores, weakening the discriminative signal needed for both reward modeling and downstream policy optimization~\citep{zhang2025chasing}.

Motivated by challenges above, we introduce \name\footnote{Short for \textbf{a}lternating pointwise \textbf{r}ubric \textbf{r}eward m\textbf{o}deling frame\textbf{w}ork.}, an alternating training scheme for  \emph{pointwise} rubric reward models from pairwise preferences. \name{} treats rubric-based evaluation as a two-module learning problem: a \emph{rubric generator} proposes criteria, and a \emph{rubric-conditioned judge} assigns criterion-level scores that are aggregated into a pointwise reward. Crucially, both modules are optimized with reinforcement learning using only pairwise preference data in the RL stage, avoiding reliance on frontier-LLM-labeled judge supervision. To obtain informative rewards, we replace hard Boolean aggregation with a probability-based scoring rule that yields finer-grained criterion scores and reduces ties. We then couple this scoring rule with phase-specific preference rewards and an alternating RL procedure, allowing the rubric generator and judge to iteratively improve each other. The resulting reward model is deployable as a standalone pointwise evaluator and can be used directly for downstream policy optimization with standard algorithms such as  DPO~\citep{rafailov2023direct} and GRPO~\citep{shao2024deepseekmath}.

In summary, our contributions are as follows:
\begin{itemize}[leftmargin=1em]
    \item We present \name, a pointwise rubric reward modeling framework for non-verifiable domains that introduces a probability-based scoring rule and an alternating RL procedure to jointly train a rubric generator and a rubric-conditioned judge, with the RL stage relying on pairwise preference data alone.
    \item We provide theoretical analysis establishing the preference consistency of our judge-side reward, variance reduction from opposite-side averaging, and phase-wise convergence guarantees for our designed alternating training pipeline.
    \item Extensive experiments on reward modeling benchmarks and downstream policy post-training show that \name\ consistently outperforms strong baselines by 3.0\%.
\end{itemize}

\section{Related Work}
\label{sec:related}
\paragraph{Reward modeling and LLM judges.}
Reward modeling is a central component of RLHF and LLM post-training~\citep{ouyang2022training}. Early methods typically learn scalar reward models from pairwise preferences using Bradley--Terry-style objectives~\citep{liu2024skywork,wang2024interpretable}. More recent works  shifted toward generative and judge-based evaluators, including models that produce rationales or chain-of-thought judgments~\citep{chen2025rm,guo2026reward,chen2025judgelrm,mahan2024generative,yu-etal-2025-self,hong2026think}, and LLM-as-a-judge methods for open-ended evaluation~\citep{zheng2023judging,li2024llms,li2025generation,xu2025incentivizing}. Such judges can be pointwise, scoring responses independently, or pairwise, comparing responses directly. While pointwise judges are efficient and directly reusable as reward models, their scores are often too coarse to capture fine-grained preference differences in subjective tasks.

\paragraph{Rubric-based reward modeling.}
Rubric-based reward modeling improves open-ended evaluation by replacing holistic judgments with structured criteria or checklists~\citep{gunjal2025rubrics,viswanathan2026checklists,pan2026rubriceval}. Prior work has explored rubric extraction and synthesis~\citep{liu2025openrubrics,li2026rubrichub}, rubric-guided policy learning~\citep{zhou2025breaking,rezaei2025online,he2025advancedif,gunjal2025rubrics}, and rubric-conditioned judges~\citep{xu2026alternating,jia2026open}. Most related are pointwise rubric reward models~\citep{xie2025auto,zhang2025chasing}, which often depend on frontier LLMs for rubric or judge supervision and use hard aggregation over criteria. \name{} differs by jointly training the rubric generator and judge from pairwise preferences during alternating RL, while using probability-based criterion scores to obtain more discriminative pointwise rewards.

\section{Preliminaries}
\label{subsec:overview}

We consider a pointwise rubric reward model for non-verifiable tasks. Given an instruction-response pair, the evaluator first generates a rubric and then queries a rubric-conditioned judge to produce criterion-level judgments, which are aggregated into a scalar score.

\paragraph{Rubric generator.}
For an instruction \(x\), the rubric generator \(\pi_r\) with parameters \(\theta_r\) samples a rubric set \(\mathbf{r}=\{r_{k}\}_{k=1}^{K}\sim\pi_r(\cdot \mid x;\theta_r)\), where \(K\) is the number of rubric items and \(r_k\) denotes the \(k\)-th rubric item.

\paragraph{Rubric-conditioned judge.}
Given an instruction \(x\), a candidate response \(y\), and a rubric set \(\mathbf{r}\), the judge \(\pi_j\) with parameters \(\theta_j\) produces criterion-level judgments \(\mathbf{a}=\{a_{k}\}_{k=1}^{K}\) with \(a_{k}\in\{0,1\}\) and corresponding rationales \(\mathbf{e}=\{e_{k}\}_{k=1}^{K}\), drawn as \((\mathbf{a},\mathbf{e})\sim\pi_j(\cdot \mid x, y, \mathbf{r};\theta_j)\). Here, \(a_k=1\) if \(y\) satisfies \(r_k\) and \(0\) otherwise.

\paragraph{Pointwise score.}
The evaluator maps \((x,y)\) to a scalar score by aggregating the criterion-level judgments. We use the weighted average
\begin{equation}
s(x,y)
=
\frac{
\sum_{k=1}^{K} w_{k} a_{k}
}{
\sum_{k=1}^{K} w_{k}
},
\label{eq:pointwise-score}
\end{equation}
where \(w_k>0\) is a predefined prior weight assigned to rubric item \(r_k\).

\section{Method}
\label{sec:method}

We propose a framework that trains a pointwise rubric reward model in two stages: a brief SFT warm-up followed by alternating RL on pairwise preference data alone. The RL stage alternates between optimizing the rubric-conditioned judge and the rubric generator, using phase-specific rewards derived from pairwise preference data. The remainder of this section introduces a probability-based scoring rule that yields smoother pointwise scores, the phase-specific RL objectives, the alternating GRPO procedure, and finally how the learned evaluator drives downstream policy training.

\subsection{Probability-based scoring}
\label{sec:probability_based_scoring}
The pointwise score in Eq.~\eqref{eq:pointwise-score} aggregates Boolean judgments \(a_k \in \{0,1\}\), which can lead to many tied scores in practice. To obtain a fine-grained signal, we follow prior LLM-based evaluators that exploit output distributions rather than only hard decisions~\citep{liu2023g,hashemi2024llm}. Specifically, for each rubric item \(r_k\), let \(p_k^{\mathrm{true}}\) and \(p_k^{\mathrm{false}}\) denote the judge's probabilities of the ``true'' and ``false'' tokens at the verdict position. We define the criterion-level score as the probability margin
$d_k = p_k^{\mathrm{true}} - p_k^{\mathrm{false}}$.
The probability-based pointwise score is then obtained by replacing \(a_k\) with \(d_k\) in Eq.~\eqref{eq:pointwise-score}. This preserves the original aggregation structure while converting discrete decisions into continuous, more discriminative reward signals.

\subsection{Phase-specific RL objectives}

Unlike prior RL-based judge training methods that continuously rely on frontier LLMs to synthesize judge labels in the RL loop, our RL stage uses only widely available pairwise preference data. Let \(\mathcal{D}=\{(x_i,y_i^1,y_i^2,o_i^\star)\}_{i=1}^{N}\) be a preference dataset with \(o_i^\star\in\{1,2\}\) indicating the preferred response.

Pairwise preference data does not directly supervise a pointwise rubric reward model because the evaluator scores each response independently. In our alternating RL stage, we therefore use different rollout estimators for the judge and the generator phase, while both phases aim to make the pointwise evaluator recover the human preference ordering.

\paragraph{Judge-side preference-consistency reward.}
Fix the rubric generator \(\pi_r\), and let \(q_i=+1\) if \(o_i^\star=1\) and \(q_i=-1\) otherwise. For each response \(y_i^b\), \(b\in\{1,2\}\), the current judge produces \(n\) stochastic evaluation rollouts, yielding scores \(\{s_{i,m}^{b}\}_{m=1}^{n}\) and rollout averages \(\bar{s}_i^{b} = \frac{1}{n}\sum_{m=1}^{n} s_{i,m}^{b}\). We compare each rollout on one side against the opposite side's average score. For the \(m\)-th rollout of \(y_i^1\),
\begin{equation}
R_{i,m}^{1}
=
\mathbb{I}\!\left[q_i\bigl(s_{i,m}^{1}-\bar{s}_i^{2}\bigr)>0\right],
\label{eq:r1}
\end{equation}
and for the \(m\)-th rollout of \(y_i^2\),
\begin{equation}
R_{i,m}^{2}
=
\mathbb{I}\!\left[q_i\bigl(\bar{s}_i^{1}-s_{i,m}^{2}\bigr)>0\right],
\label{eq:r2}
\end{equation}
When \(o_i^\star=1\), the first reward encourages judge rollouts of \(y_i^1\) to score above \(\bar{s}_i^2\), while the second encourages judge rollouts of \(y_i^2\) to score below \(\bar{s}_i^1\); the direction reverses automatically when \(o_i^\star=2\). The corresponding judge-side pair reward is
\begin{equation}
R_i
=
\frac{1}{2n}\sum_{m=1}^{n}\left(R_{i,m}^{1}+R_{i,m}^{2}\right) \in [0,1],
\label{eq:group-reward}
\end{equation}
and the judge-side population objective is
\begin{equation}
J_j(\theta_j;\theta_r)
=
\E_{(x_i,y_i^1,y_i^2,o_i^\star)\sim \mathcal{D}}\left[R_i\right].
\label{eq:judge-objective}
\end{equation}
This phase converts pairwise preference labels into a pointwise supervision signal for the judge, even though each response is scored independently.

\paragraph{Rubric generator-side preference-ranking reward.}
Fix the judge \(\pi_j\). For each instruction \(x_i\), the rubric generator samples \(n\) rubric rollouts \(\{\mathbf r_{i,m}\}_{m=1}^{n}\). Each sampled rubric is then used by the frozen judge to evaluate both responses under the same rubric, producing \(s_{i,m}^{1}\) and \(s_{i,m}^{2}\). We assign each sampled rubric the reward
\begin{equation}
R_{i,m}^{(r)}
=
\mathbb{I}\!\left[q_i\bigl(s_{i,m}^{1}-s_{i,m}^{2}\bigr)>0\right]
-\lambda \Omega(\mathbf r_{i,m}),
\label{eq:generator-rollout-reward}
\end{equation}
where \(\Omega(\mathbf r_{i,m})\) penalizes unnecessarily long or semantically redundant rubrics; the exact length-aware bonus structure is given in Appendix~\ref{app:length-reward}. The corresponding generator-side population objective is
\begin{equation}
J_r(\theta_r;\theta_j)
=
\E_{(x_i,y_i^1,y_i^2,o_i^\star)\sim \mathcal{D}}
\left[\frac{1}{n}\sum_{m=1}^{n} R_{i,m}^{(r)}\right].
\label{eq:generator-objective}
\end{equation}
The generator is rewarded for producing concise, non-redundant rubrics that enable the frozen judge to rank the preferred response higher. We optimize the judge and generator in alternating update phases until convergence.

\subsection{Alternating optimization}

\paragraph{SFT warm-up.}
Before alternating RL, we initialize both modules with a brief supervised stage. We adopt the high-quality rubrics from \textsc{OpenRubrics}~\citep{liu2025openrubrics} as-is and generate judge labels for each (instruction, response) pair with GPT-5-mini. We then filter the warm-up data to keep only pairs whose chosen response receives a higher aggregated pointwise score than the rejected response, and use the resulting data to SFT both the rubric generator \(\pi_r\) and the rubric-conditioned judge \(\pi_j\). After this one-time warm-up, the alternating RL stage uses only pairwise preference data and no further frontier-LLM annotation.

\paragraph{Judge phase.}
We fix \(\pi_r\) and optimize \(\pi_j\) with GRPO on \(J_j(\theta_j;\theta_r)\). For each preference pair, the GRPO group contains the \(2n\) rollout rewards \(\{R_{i,m}^{1},R_{i,m}^{2}\}_{m=1}^{n}\). Since \(\pi_r\) is frozen, the rubrics for each training instance can be pre-generated once and reused throughout the judge update, which substantially reduces wall-clock cost.

\paragraph{Rubric-generator phase.}
We then freeze the judge \(\pi_j\) and optimize the rubric generator \(\pi_r\) with GRPO on \(J_r(\theta_r;\theta_j)\). For each preference pair, the GRPO group consists of the \(n\) sampled rubrics and their rewards \(\{R_{i,m}^{(r)}\}_{m=1}^{n}\). The reward is assigned to each rubric rollout, as a rubric defines the evaluation criteria for both responses and is scored by whether the frozen judge ranks the preferred response higher under those criteria.

\subsection{Policy Training}
After training, the pointwise rubric reward model serves as a standalone evaluator for policy optimization. In the offline setting, we use it to construct DPO training pairs~\citep{rafailov2023direct} by scoring candidate responses for each instruction and selecting the highest- and lowest-scoring responses as the chosen and rejected samples, respectively. In the online setting, we use the pointwise score as the reward signal for GRPO~\citep{shao2024deepseekmath}.

\section{Theoretical Analysis}
\label{sec:analysis}
We summarize four theoretical properties of our method and defer all proofs to Appendix~\ref{app:proofs}. The first two results analyze the judge-side reward \(J_j\). The last two are phase-wise GRPO guarantees: they apply to either a judge phase or a generator phase with the other module held fixed.

\subsection{Preference consistency of the judge-side reward}
Fix an instruction \(x\) and two candidate responses \(y^+\) and \(y^-\) such that the human prefers \(y^+\) over \(y^-\). Let the rollout-level scalar scores satisfy \(S^+=\mu^+ + \xi^+\) and \(S^- = \mu^- + \xi^-\), and define \(\Delta:=\mu^+-\mu^-\). Let \(J_n\) denote the judge-side preference recovery probability induced by our rollout comparison rule.

\begin{theorem}[Preference consistency]\label{thm:pref-consistency}
Under independent, continuous, zero-mean, symmetric rollout noise, \(J_n>\frac{1}{2}\) if and only if \(\Delta>0\), and \(J_n=\frac{1}{2}\) when \(\Delta=0\).
\end{theorem}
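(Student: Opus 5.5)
The plan is to make $J_n$ explicit, reduce it by symmetry to a single probability, and then show that this probability is a strictly increasing function of $\Delta$ which equals $\tfrac12$ at $\Delta=0$.

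\textbf{Step 1: write down $J_n$.} Take the roles of Eqs.~\eqref{eq:r1}--\eqref{eq:r2} with $y^+$ as the preferred side, so $q=+1$. Let $\{\xi^+_m\}_{m=1}^n$ and $\{\xi^-_m\}_{m=1}^n$ be the rollout noises. I assume they are mutually independent, continuous, zero-mean and symmetric, and identically distributed within each side. By linearity of expectation and exchangeability of the rollouts,
\[
J_n=\tfrac12\Bigl(\Pr\bigl[S^+_1>\bar S^-\bigr]+\Pr\bigl[\bar S^+>S^-_1\bigr]\Bigr),
\]
with $\bar S^\pm=\mu^\pm+\bar\xi^\pm$. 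The first term equals $\Pr[\bar\xi^- - \xi^+_1<\Delta]$, and the second equals $\Pr[\xi^-_1-\bar\xi^+<\Delta]$.

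\textbf{Step 2: reduce to CDFs.} Define $U:=\bar\xi^- - \xi^+_1$ and $V:=\xi^-_1-\bar\xi^+$. Then $J_n=\tfrac12\bigl(F_U(\Delta)+F_V(\Delta)\bigr)$.
\begin{itemize}
\item Each of $U$ and $V$ is a difference of two independent symmetric random variables, so it is itself symmetric about $0$. Here I use that an average of independent symmetric variables is symmetric, and that if $A$ and $B$ are independent and symmetric then $A-B\overset{d}{=}-A+B$.
\item Each of $U$ and $V$ is continuous, because it contains an independent continuous summand: the convolution of a continuous law with any law has no atoms. Hence $F_U(0)=F_V(0)=\tfrac12$, so $J_n=\tfrac12$ when $\Delta=0$.
\item By the same symmetry, $F_U(-t)=1-F_U(t)$ for all $t$.
\end{itemize}

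\textbf{Step 3: the ``iff'' direction.} It suffices to show that $F_U(t)>\tfrac12$ for every $t>0$; the same holds for $V$, and the case $\Delta<0$ then gives $J_n<\tfrac12$ by reflection.

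This is the main obstacle. A symmetric continuous law can put zero mass on an interval $(0,t)$, so strict inequality genuinely needs more than the stated hypotheses. I would first try a positive-density argument. If the noise has a density that is positive in a neighborhood of $0$, then the convolution defining $U$ has a density positive near $0$. Then $\Pr[0<U<t]>0$, and so $F_U(t)=\tfrac12+\Pr[0<U<t]>\tfrac12$.

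If the hypotheses in the appendix are literally only ``continuous and symmetric'', I would state this mild support condition explicitly. A concrete form is that $0$ lies in the support of $U$, for example because the noise is unimodal or has full support. Under that condition, monotonicity of CDFs gives $J_n\ge\tfrac12$ for $\Delta>0$, with strict inequality. Reflection gives $J_n<\tfrac12$ for $\Delta<0$. Combined with the $\Delta=0$ case, this yields $J_n>\tfrac12 \iff \Delta>0$.
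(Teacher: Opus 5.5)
Your argument follows the same route as the paper's proof. Both split $J_n$ into the two one-sided probabilities and write each as $\Prob(\Delta+Z>0)$ with $Z$ continuous and symmetric (your $U,V$ are the paper's $-Z_n^-,-Z_n^+$), so that $Z$ has median $0$. The paper then simply asserts that $c\mapsto\Prob(Z_n^->c)$ is strictly decreasing. That is exactly the step you flag, and your caution is warranted: when the two sides have different noise laws, the stated hypotheses do not suffice. For example, take $n=1$, $\xi^+$ uniform on $[-11,-10]\cup[10,11]$, $\xi^-$ uniform on $[-2,-1]\cup[1,2]$, and $\Delta=1$. Then $|\xi^+-\xi^-|\ge 8>\Delta$, so $S^+>S^-$ iff $\xi^+>0$, and $J_1=\tfrac12$ although $\Delta>0$. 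The same works for every $n$: with $\xi^+$ uniform on $[-4n-1,-4n]\cup[4n,4n+1]$, the same $\xi^-$, and $0<\Delta<\tfrac12$, one gets $|U|\ge 2$ and $|V|\ge\tfrac12$, hence $J_n=\tfrac12$.

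Two refinements of your fix.

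First, only the ``if'' direction needs anything extra. For $\Delta\le 0$ you have $F_U(\Delta)\le F_U(0)=\tfrac12$, and likewise for $V$, so $J_n\le\tfrac12$. Hence the ``only if'' direction and the $\Delta=0$ case hold under the stated hypotheses alone. For $\Delta>0$ it is enough that one of $U,V$ (not both) has $0$ in its support.

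Second, your justification (that a symmetric continuous law may vanish on $(0,t)$) concerns the noise, not $U$, and for a convolution such a gap need not transfer. Suppose both sides share a single noise law $\nu$, and take $x$ in the support of $\nu$. With positive probability all $n+1$ variables entering $U$ lie within $\epsilon/2$ of $x$, which gives $\Prob(|U|<\epsilon)>0$. By symmetry and continuity this yields $\Prob(0<U<\epsilon)>0$. So the theorem holds exactly as stated in the identically distributed case. Your support condition is genuinely needed only for mismatched (e.g.\ heteroscedastic) noise laws, a setting the paper itself allows in the next theorem.
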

Theorem~\ref{thm:pref-consistency} states that the judge-side reward is calibrated for recovering the human preference.

\subsection{Variance reduction from opposite-side averaging}
\begin{theorem}[Variance reduction from opposite-side averaging]\label{thm:variance-reduction}
Under Gaussian rollout noise and \(\Delta>0\), comparing each rollout against the opposite response's rollout average yields a strictly higher preference recovery probability than a single pairwise comparison whenever \(n>1\). In the homoscedastic case \(\sigma_+=\sigma_-=\sigma\),
\begin{equation}
\setlength{\abovedisplayskip}{3pt}
\setlength{\belowdisplayskip}{3pt}
J_n
=
\Phi\!\left(\frac{\Delta}{\sigma\sqrt{1+1/n}}\right),
\label{eq:body-variance-short}
\end{equation}
where \(\Phi\) is the standard normal CDF. Thus, \(J_n\) increases monotonically with \(n\). The exact heteroscedastic expression for \(\sigma_+\ne\sigma_-\) is given in Appendix~\ref{app:proofs}.
\end{theorem}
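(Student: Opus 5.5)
We work with the expected judge-side reward and reduce it to the probability that a single rollout beats the opposite side's average. For a preference pair with $y^+$ preferred, the expectation of $R_{i,m}^{1}$ is $\Pr[S^+_m - \bar S^- > 0]$, and the expectation of $R_{i,m}^{2}$ is $\Pr[\bar S^+ - S^-_m > 0]$. Hence
\[
J_n=\tfrac12\Pr[S^+_m-\bar S^->0]+\tfrac12\Pr[\bar S^+-S^-_m>0].
\]
We assume the rollouts are i.i.d. Gaussian: $\xi^+\sim\mathcal N(0,\sigma_+^2)$ and $\xi^-\sim\mathcal N(0,\sigma_-^2)$, with the two sides independent.

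\textbf{Step 1: closed form for each term.} The single rollout $S^+_m$ is independent of $\bar S^-$, and $\bar S^-\sim\mathcal N(\mu^-,\sigma_-^2/n)$. So $S^+_m-\bar S^-$ is Gaussian with mean $\Delta$ and variance $\sigma_+^2+\sigma_-^2/n$. Likewise $\bar S^+-S^-_m$ has mean $\Delta$ and variance $\sigma_+^2/n+\sigma_-^2$. This gives the exact heteroscedastic expression
\[
J_n=\tfrac12\Phi\!\Big(\tfrac{\Delta}{\sqrt{\sigma_+^2+\sigma_-^2/n}}\Big)+\tfrac12\Phi\!\Big(\tfrac{\Delta}{\sqrt{\sigma_+^2/n+\sigma_-^2}}\Big).
\]
Setting $\sigma_\pm=\sigma$ makes both terms equal to $\Phi\big(\Delta/(\sigma\sqrt{1+1/n})\big)$, which is \eqref{eq:body-variance-short}.

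\textbf{Step 2: comparison with a single pairwise comparison.} The baseline compares one rollout against one rollout, so its recovery probability is
\[
J_1=\Phi\big(\Delta/\sqrt{\sigma_+^2+\sigma_-^2}\big).
\]
For $n>1$, each variance in Step 1 is strictly smaller than $\sigma_+^2+\sigma_-^2$, provided the relevant $\sigma$ is positive. Since $\Delta>0$ and $\Phi$ is strictly increasing, each term strictly exceeds $\Phi\big(\Delta/\sqrt{\sigma_+^2+\sigma_-^2}\big)$, and so their average does too. This gives the strict improvement.

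\textbf{Step 3: monotonicity in $n$.} Each variance is decreasing in $n$, so each $\Phi$ argument is increasing in $n$, hence $J_n$ is increasing. In the homoscedastic case this is immediate from $\sqrt{1+1/n}$ decreasing in $n$.

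\textbf{Main obstacle.} The delicate point is modelling which comparison the reward uses. A rollout $S^+_m$ is included in $\bar S^+$ but is compared only against $\bar S^-$, so the needed independence is $S^+_m\perp \bar S^-$, which holds since the two responses are evaluated separately. We would state this explicitly, along with the positivity of $\sigma_\pm$, which strictness requires. A secondary question is whether $J_n$ means the per-rollout probability or the average of the two terms. We will prove the result for the average, since this covers both directions and matches $\mathbb E[R_i]$ in \eqref{eq:group-reward}.
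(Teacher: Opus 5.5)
Your proposal is correct and follows the paper's proof essentially step for step. You use the same Gaussian closed forms for the two branches, $\Phi\bigl(\Delta/\sqrt{\sigma_+^2+\sigma_-^2/n}\bigr)$ and $\Phi\bigl(\Delta/\sqrt{\sigma_-^2+\sigma_+^2/n}\bigr)$, the same strict comparison against the single-pair baseline (which, as you note, is just the $n=1$ case), and the same homoscedastic specialization; your explicit independence $S^+_m\perp\bar S^-$, the positivity of the noise variances needed for strictness, and monotonicity in the heteroscedastic case are small refinements the paper leaves implicit.
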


Theorem~\ref{thm:variance-reduction} shows that opposite-side averaging improves judge reliability by reducing comparison variance.

\subsection{Optimization guarantees}
For the next two results, let \(b\in\{j,r\}\) denote a phase of the alternating algorithm: \(b=j\) corresponds to judge updates with \(\theta_r\) frozen, and \(b=r\) corresponds to generator updates with \(\theta_j\) frozen. Let \(J_b\) be the corresponding population objective (\(J_j\) or \(J_r\)), and let \(L_{G,b}\) be the one-step on-policy GRPO surrogate for that phase. The effective rollout group sizes are \(G_j=2n\) for the judge phase and \(G_r=n\) for the generator phase. Here \(T\) denotes the number of GRPO updates performed in phase \(b\), \(\theta_b^t\) is the active-module parameter at phase step \(t\), and \(\theta_{-b}\) denotes the frozen parameter of the other module.

\begin{theorem}[Phase-wise surrogate stationary-point convergence of GRPO]\label{thm:surrogate-grpo}
Fix a phase \(b\in\{j,r\}\) and hold the other module constant. Under the smoothness, unbiased-gradient, bounded-variance, and diminishing-step-size assumptions in Appendix~\ref{app:proofs},
\begin{equation}
\setlength{\abovedisplayskip}{3pt}
\setlength{\belowdisplayskip}{3pt}
\frac{1}{T}\sum_{t=0}^{T-1}\E\norm{\nabla_{\theta_b}L_{G,b}(\theta_b^t;\theta_{-b})}^2
=
O(T^{-1/2}).
\label{eq:body-surrogate-short}
\end{equation}
Hence GRPO attains first-order stationarity for the phase-specific surrogate.
\end{theorem}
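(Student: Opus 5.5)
The plan is to run the standard nonconvex SGD analysis on the phase-specific surrogate, treating the frozen module \(\theta_{-b}\) as a fixed parameter of the objective. Write \(f(\theta):=L_{G,b}(\theta;\theta_{-b})\) and let \(g_t\) be the stochastic GRPO gradient estimate at \(\theta_b^t\), computed from a minibatch of preference pairs with \(G_b\) rollouts per pair. The update is \(\theta_b^{t+1}=\theta_b^t+\eta_t g_t\), with ascent on the surrogate. The appendix assumptions give three ingredients. First, \(f\) is \(L\)-smooth. Second, \(\E[g_t\mid\theta_b^t]=\nabla f(\theta_b^t)\); this is legitimate because at the on-policy point the clipped ratio equals one, so the gradient of the one-step surrogate is exactly the score-function estimator with group-normalized advantages. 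Third, \(\E\norm{g_t-\nabla f(\theta_b^t)}^2\le\sigma_g^2\). We also need \(f\) bounded above by some \(f^\star\). This holds automatically here, since the rewards lie in \([0,1]\) in the judge phase and are bounded in the generator phase, and group-normalized advantages are bounded by \(\sqrt{G_b-1}\).

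\textbf{Step 1: one-step ascent inequality.} Apply the descent lemma to \(-f\):
\[
f(\theta_b^{t+1})\ge f(\theta_b^t)+\eta_t\langle\nabla f(\theta_b^t),g_t\rangle-\tfrac{L}{2}\eta_t^2\norm{g_t}^2 .
\]
Take the conditional expectation and use unbiasedness together with \(\E\norm{g_t}^2\le\norm{\nabla f}^2+\sigma_g^2\). For \(\eta_t\le 1/L\) this gives
\[
\E f(\theta_b^{t+1})\ge \E f(\theta_b^t)+\tfrac{\eta_t}{2}\E\norm{\nabla f(\theta_b^t)}^2-\tfrac{L}{2}\eta_t^2\sigma_g^2 .
\]

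\textbf{Step 2: telescope and choose the step size.} Sum over \(t=0,\dots,T-1\) and bound \(f(\theta_b^T)\le f^\star\) to get
\[
\sum_t\eta_t\E\norm{\nabla f(\theta_b^t)}^2\le 2\bigl(f^\star-f(\theta_b^0)\bigr)+L\sigma_g^2\sum_t\eta_t^2 .
\]
Take \(\eta_t=\eta_0/\sqrt{T}\), or a diminishing schedule \(\eta_t=\eta_0/\sqrt{t+1}\), which loses only a logarithmic factor, which is then absorbed or removed by using the horizon-dependent constant step. With the constant step the left side is \((\eta_0/\sqrt T)\sum_t\E\norm{\nabla f}^2\) and the right side is \(O(1)\). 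Dividing by \(T\eta_0/\sqrt T\) gives the average squared gradient norm bound \(O(T^{-1/2})\), which is exactly the claim.

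\textbf{Main obstacle.} The delicate point is justifying the unbiased-gradient and bounded-variance assumptions for GRPO specifically. Group normalization by the empirical standard deviation makes the advantage a nonlinear function of the group, so the gradient estimator is unbiased for the surrogate \(L_{G,b}\) as defined with group statistics, but not for \(J_b\) itself. That is why the statement is phrased for the surrogate. The plan is to define \(L_{G,b}\) as the expectation over groups of the normalized-advantage objective evaluated at the current policy, so that unbiasedness holds by construction.

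Bounded variance then follows from two facts: advantages are bounded by \(\sqrt{G_b-1}\), using \(G_j=2n\) and \(G_r=n\), and the score function \(\nabla\log\pi\) is bounded, which is assumed in the appendix. Degenerate groups with zero standard deviation are handled by the convention that their advantage is zero, which keeps the estimator bounded. Once these two conditions hold, the remainder is the routine argument above, and it applies identically in either phase.
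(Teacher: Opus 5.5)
Your Steps 1--2 are the paper's proof essentially line for line. Both use the ascent form of the smoothness inequality, conditional unbiasedness plus the variance bound (the paper's A3 is \(\sigma_b^2/G_b\), i.e.\ your \(\sigma_g^2\)), the factor \(1-L_b\eta_t/2\ge\tfrac12\), telescoping against the upper bound \(L_{G,b}^{\star}\) from A1, and the step \(\eta_t=\eta_0/\sqrt{T}\) of A4, reaching the same bound \(2\bigl(L_{G,b}^{\star}-L_{G,b}(\theta_b^0)\bigr)/(\eta_0\sqrt{T})+L_b\eta_0\sigma_b^2/(G_b\sqrt{T})\).

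Your ``main obstacle'' paragraph is unnecessary, since the paper simply posits A1--A3 (and makes no bounded-score assumption). Its ``unbiased by construction'' justification would also fail as stated. Group-normalized advantages sum to zero, so the on-policy value of the normalized-advantage objective is identically zero and its gradient is not the GRPO direction. Alternatively, re-anchoring the sampling distribution at each \(\theta_b^t\) gives a different function at every step, which breaks the telescoping.
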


\begin{theorem}[Phase-wise stationary neighborhood for the true objective]\label{thm:true-objective-neighborhood}
Fix a phase \(b\in\{j,r\}\) and hold the other module constant. If, in addition, the surrogate-to-true gradient gap for that phase is bounded as specified in Appendix~\ref{app:proofs}, then
\notag\begin{align}
\setlength{\abovedisplayskip}{3pt}
\setlength{\belowdisplayskip}{3pt}
\frac{1}{T}\sum_{t=0}^{T-1}&\E\norm{\nabla_{\theta_b}J_b(\theta_b^t;\theta_{-b})}^2
=
O(T^{-1/2}) \\&+ O(G_b^{-1}) + O(B_{\mathrm{stale}}^{(b)}) + O(B_{\mathrm{clip}}^{(b)}).
\label{eq:body-true-short}
\end{align}
Here \(G_b\) is the effective rollout group size in phase \(b\), with \(G_j=2n\) for the judge phase and \(G_r=n\) for the generator phase. The \(O(G_b^{-1})\) term absorbs the finite-group error \(B_{\mathrm{grp}}^{(b)}/G_b\), while \(B_{\mathrm{stale}}^{(b)}\) and \(B_{\mathrm{clip}}^{(b)}\) denote stale-policy and clipping bias terms. Hence the rollout-dependent term is \(O(n^{-1})\) in both phases.
\end{theorem}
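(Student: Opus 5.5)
The proof combines the surrogate stationarity of Theorem~\ref{thm:surrogate-grpo} with a uniform bound on the gap between the surrogate gradient and the true gradient. Fix a phase \(b\) and write \(g_t:=\nabla_{\theta_b}J_b(\theta_b^t;\theta_{-b})\) and \(h_t:=\nabla_{\theta_b}L_{G,b}(\theta_b^t;\theta_{-b})\). We use the elementary inequality \(\norm{g_t}^2\le 2\norm{h_t}^2+2\norm{g_t-h_t}^2\), take expectations, and average over \(t=0,\dots,T-1\). The first term is \(O(T^{-1/2})\) directly by Theorem~\ref{thm:surrogate-grpo}, so the task reduces to bounding \(\frac1T\sum_t\E\norm{g_t-h_t}^2\) by \(O(G_b^{-1})+O(B_{\mathrm{stale}}^{(b)})+O(B_{\mathrm{clip}}^{(b)})\).

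The bounded-gap assumption we would impose, and verify is natural, is a decomposition of the gap into three sources.
\begin{itemize}
\item \textbf{Finite-group baseline/normalization error.} GRPO subtracts the group mean and divides by the group standard deviation, both estimated from \(G_b\) rollouts rather than taken at their population values.
\item \textbf{Staleness.} Rollouts are drawn from \(\pi_{\theta_{\mathrm{old}}}\) rather than \(\pi_{\theta_b^t}\).
\item \textbf{Clipping.} The PPO-style ratio clip truncates the importance weights.
\end{itemize}
Accordingly we write \(h_t-g_t=\delta^{\mathrm{grp}}_t+\delta^{\mathrm{stale}}_t+\delta^{\mathrm{clip}}_t\) and use \(\norm{a+b+c}^2\le 3(\norm a^2+\norm b^2+\norm c^2)\). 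The stale and clip pieces are bounded by the assumed constants \(B_{\mathrm{stale}}^{(b)}\) and \(B_{\mathrm{clip}}^{(b)}\) uniformly in \(t\). Those constants are in squared-norm units, or we state the assumption so that they are.

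The main work is the finite-group term. Because rewards are indicators, for \(R_i\) in the judge phase and \(R^{(r)}_{i,m}\) in the generator phase, they lie in a bounded interval. We then argue as follows.
\begin{enumerate}
\item The leave-in group mean \(\bar R_G\) has bias \(O(1/G_b)\) in the policy-gradient estimator. The cross term \(\E[\nabla\log\pi(y_m)\,R_m]/G_b\) arises because each sample contributes to its own baseline.
\item The fluctuation of \(\bar R_G\) around \(\E R\) has variance \(\mathrm{Var}(R)/G_b\le 1/(4G_b)\).
\item For the standard-deviation normalization, a delta-method expansion of \(1/\hat\sigma_G\) around \(1/\sigma\) gives squared error \(O(1/G_b)\). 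This requires the assumption that \(\sigma\) is bounded below, or a floor \(\epsilon\) added in the denominator as in practice.
\end{enumerate}
Combining these with bounded score functions \(\norm{\nabla\log\pi}\le M\) yields \(\E\norm{\delta^{\mathrm{grp}}_t}^2\le B_{\mathrm{grp}}^{(b)}/G_b\).

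The expected obstacle is exactly this normalization step. The population objective \(J_b\) has no standard-deviation rescaling, so \(L_{G,b}\) targets a rescaled gradient \(\nabla J_b/\sigma\) rather than \(\nabla J_b\). We would handle it by either of two routes. The first defines the true-objective comparison for the rescaled objective, absorbing the constant \(1/\sigma\in[1/\sigma_{\max},1/\sigma_{\min}]\) into the \(O(\cdot)\). The second, which we would try first, states the gap assumption directly with respect to \(\nabla J_b\) and notes that the rescaling only changes constants because \(\sigma\) is bounded above and below.

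Finally, we substitute \(G_j=2n\) and \(G_r=n\). Both phases then give a rollout-dependent term of \(O(n^{-1})\), which completes the bound in Theorem~\ref{thm:true-objective-neighborhood}.
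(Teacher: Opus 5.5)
Your reduction is the paper's proof. Bound $\norm{\nabla_{\theta_b}J_b}^2\le 2\norm{\nabla_{\theta_b}L_{G,b}}^2+2\norm{\nabla_{\theta_b}J_b-\nabla_{\theta_b}L_{G,b}}^2$, average over $t$, and control the first term by Theorem~\ref{thm:surrogate-grpo}. The paper stops there, because the theorem's hypothesis is assumption A5 (Eq.~\eqref{eq:appendix-gradient-gap}). A5 bounds the \emph{whole} squared gap, uniformly in $\theta_b$, by $B_{\mathrm{grp}}^{(b)}/G_b+B_{\mathrm{stale}}^{(b)}+B_{\mathrm{clip}}^{(b)}$; substituting this and $G_j=2n$, $G_r=n$ finishes. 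If you invoke that hypothesis for the entire gap, your argument is complete and matches the paper's. The part you call the main work is deriving the $O(G_b^{-1})$ finite-group piece from bounded rewards and bounded score functions. It is unnecessary, and as sketched it would not go through, so leave it out of the proof.

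The first problem is that your three-step analysis treats each rollout's reward as a function of that rollout alone. In the judge phase, $R_{i,m}^{1}=\mathbb{I}[q_i(s_{i,m}^{1}-\bar s_i^{2})>0]$ depends on all $n$ rollouts of $y_i^2$ through $\bar s_i^{2}$. In the generator phase, the length bonus depends on the whole group through $K_i^{\min}$ and $\bar K_i$. The exact score-function gradient of $J_j$ therefore contains cross terms $\sum_{\ell}\E[R_{i,m}^{1}\nabla_{\theta_j}\log\pi_j(\tau_{i,\ell}^{2})]$, where $\tau_{i,\ell}^{2}$ is the $\ell$-th judge rollout on $y_i^2$. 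These are the derivative through the law of the opposite-side average. GRPO multiplies each $R_{i,m}^{1}$ only by the score of its own rollout, so it discards them, and they do not shrink with $n$. Concretely, take the homoscedastic Gaussian model of Theorem~\ref{thm:variance-reduction} with only $\mu^{\pm}$ depending on $\theta_j$. There the per-rollout REINFORCE direction (before baseline and normalization) has mean exactly $\tfrac12\nabla_{\theta_j}J_j$ for every $n$. The group baseline changes this only by $O(1/n)$, so the squared gap stays near $\tfrac14\norm{\nabla_{\theta_j}J_j}^2$ instead of vanishing.

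The second problem is the normalization. GRPO divides by a per-group standard deviation $\sigma_i$, so the surrogate direction is $\E_i[\nabla J_i/\sigma_i]$. That is a pair-dependent reweighting of $\nabla J_b=\E_i[\nabla J_i]$, not a global rescaling. Bounds $\sigma_{\min}\le\sigma_i\le\sigma_{\max}$ do not make the two gradients comparable. For example, two equally likely pairs with $\nabla J_1=1$, $\nabla J_2=-2$ and $\sigma_2=2\sigma_1$ give a zero surrogate gradient while $\nabla J_b=-\tfrac12$. So neither of your two routes absorbs this into constants. Both effects are $G_b$-independent biases, which is why the paper assumes the gap bound rather than deriving it.
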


\section{Experiment}

\subsection{Datasets and Experiment Settings}

\noindent \textbf{Training data.} We train \name\ on the general-domain pairwise preference data from \textsc{OpenRubrics}~\citep{liu2025openrubrics}. We divide the dataset into equally sized, disjoint subsets and assign one subset to each outer iteration of the alternating procedure.

\noindent \textbf{Backbone and variants.}
Both the rubric generator and the judge are initialized from Qwen-3-8B~\citep{yang2025qwen3}. At inference time, \name\ follows the two-stage pointwise rubric-judging pipeline described in Section~\ref{sec:method}. We additionally report two variants: a voting@5 ensemble, where five independent rubric-judging runs are aggregated by majority voting; and \name\ w/o RL, a pointwise rubric reward model that uses only the SFT warm-up checkpoint without the alternating RL stage, included to isolate the contribution of RL.

\noindent \textbf{Baselines.}
For reward-model evaluation, we compare \name\ with generative judges (RM-R1~\citep{chen2025rm}, RRM~\citep{guo2026reward}, JudgeLRM~\citep{chen2025judgelrm}), prompted Qwen-3-8B rubric+judge pipelines, and trained rubric-based baselines (\textsc{Rubric-RM}~\citep{liu2025openrubrics}, {\name} w/o RL, \textsc{RIFL}~\citep{he2025advancedif}), with black-box APIs included for reference. For downstream policy training on Qwen2.5-7B-Instruct~\citep{qwen2.5}, we compare against scalar reward models (Skywork~\citep{liu2024skywork}, ArmoRM~\citep{wang2024interpretable}), AI feedback data (UltraFeedback~\citep{cui2023ultrafeedback}), and rubric/checklist baselines (RLCF~\citep{viswanathan2026checklists}, \textsc{Rubric-RM}, {\name} w/o RL, \textsc{RIFL}). Full baseline descriptions are given in Appendix~\ref{app:baselines}.

\noindent \textbf{Evaluation benchmarks and metrics.}
We evaluate \name\ in two settings. For \emph{reward-model evaluation}, we use RewardBench (Chat/Chat-Hard)~\citep{lambert2025rewardbench}, FollowBench~\citep{jiang2024followbench}, PPE-IFEval~\citep{frick2025evaluate}, InfoBench~\citep{qin2024infobench}, IFBench~\citep{peng2025agentic}, RM-Bench~\citep{liu2025rm}, RewardBench2 (Precise-IF/Focus)~\citep{malik2025rewardbench}, and HelpSteer3~\citep{wang2026helpsteer3}. For \emph{downstream policy evaluation}, we use IFEval~\citep{frick2025evaluate}, InfoBench~\citep{qin2024infobench}, IFBench~\citep{peng2025agentic}, Arena-Hard~\citep{li2024crowdsourced}, AlpacaEval2~\citep{dubois2024length}, Creative Writing Benchmark v3~\citep{paech2025eq}, and WildBench~\citep{lin2025wildbench}. Evaluation construction and metric details are deferred to Appendix~\ref{app:eval-details}.

\begin{figure}[t!]
  \centering
   \vspace{-0.5ex}
  \includegraphics[width=0.6\linewidth]{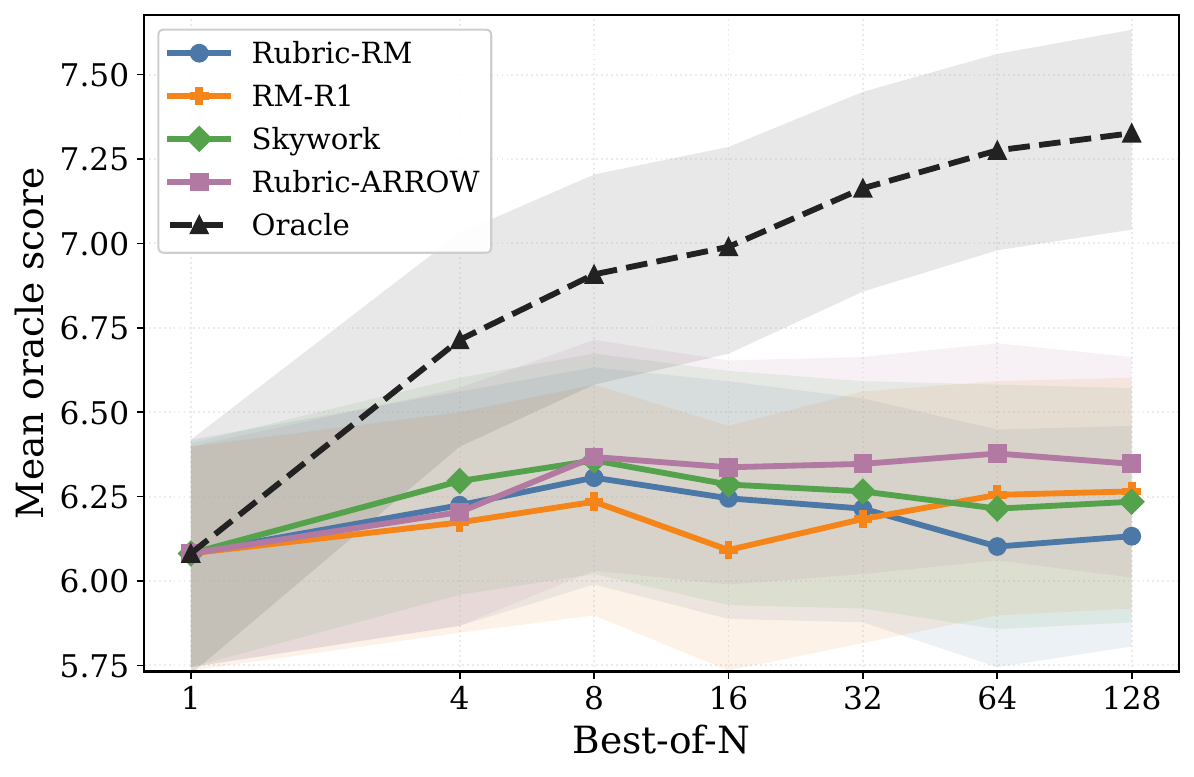}
  \caption{Best-of-\(N\) selection quality on WildBench: each method selects one response from the first \(N\) sampled candidates, and we report the mean oracle score. The Oracle curve marks the per-prompt best candidate (upper bound); shaded regions show 95\% bootstrap confidence intervals over prompts. Failure to track the Oracle curve at larger \(N\) indicates reward-model overoptimization. \vspace{-1ex}}
  \label{fig:bestofn}
  \vspace{-1ex}
\end{figure}

\begin{table*}[t!]
\centering
\renewcommand\arraystretch{0.9}
\caption{Comparison of judge and reward models across benchmarks. RewardBench2 reports its Precise IF and Focus dimensions; the API pipelines use GPT-4.1-Mini as the rubric generator and Gemini-2.5-Flash-Lite as the judge. Best results in \textbf{bold}.
\vspace{-1.5ex}
}
\label{tab:main_results}
\resizebox{0.99\textwidth}{!}{%
\begin{tabular}{l cc cccc c cc cc}
\toprule
\multirow{2.5}{*}{}
& \multicolumn{2}{c}{\bf RewardBench}
& \multicolumn{4}{c}{\bf IF Evaluation Benchmarks}
& \bf RM-Bench
& \multicolumn{2}{c}{\bf RewardBench2}
& \multirow{2.5}{*}{\bf HelpSteer3}
& \multirow{2.5}{*}{\bf Avg.} \\
\cmidrule(lr){2-3}  \cmidrule(lr){4-7} \cmidrule(lr){8-8} \cmidrule(lr){9-10}
                            & Chat & Chat Hard  & FollowBench   &   PPE-IFEval  &   InfoBench   &   IFBench     &    Chat        & Precise IF    &   Focus  &      &   \\
\midrule
\multicolumn{11}{l}{\it Black-box LLMs (For reference only)} \\
\midrule
Claude-3.5-Sonnet           & 96.4 & 74.0 & -- & 58.0 & -- & -- & 62.5 & 38.8 & 87.0 & -- & - \\
Gemini-2.5-Flash            & 95.0 & 83.3 & 86.0 & 75.0 & 85.6 & 69.3 & 78.5 & 57.5 & 84.1 & 70.6 & 78.5  \\
API Prompting (Rubric+Judge pairwise) & 79.6 & 79.2 & 83.2 & 61.0 & 82.2 & 66.2 & 67.9 & 42.5 & 79.6 & 71.4 & 71.3 \\
API Prompting (Rubric+Judge pointwise)& 51.7 & 61.5 & 60.5 & 26.4 & 44.8 & 33.6 & 37.8 & 14.8 & 50.4 & 39.2 & 42.1 \\
API Prompting (direct Judge)          & 89.6 & 71.2 & 81.7 & 59.2 & 72.9 & 60.4 & 67.2 & 13.2 & 63.4 & 70.3 & 64.9 \\
\midrule
\multicolumn{11}{l}{\it Larger White-box LLMs (For reference only)} \\
\midrule
RM-R1-14B (Qwen-2.5-Inst)   & 73.5 & 79.8 & 84.0 & 59.0 & 85.5 & 60.8 & 73.2 & 23.8 & 84.6	& 74.8 & 69.9 \\
RM-R1-14B (DeepSeek-Dist)   & 90.3 & 78.9 & 89.9 & 61.2 & 82.4 & 59.0 & 71.4 & 30.6 & 79.0 & 74.6 & 71.7 \\
RM-R1-32B (Qwen-2.5-Inst)   & 95.3 & 80.3 & 84.9 & 60.4 & 86.1 & 60.4 & 75.3 & 33.1 & 84.2 & 72.9 & 73.3\\
RM-R1-32B (DeepSeek-Dist)   & 95.3 & 83.1 & 89.2 & 63.2 & 85.0 & 58.6 & 74.2 & 36.9 & 79.2 & 75.6 & 74.0\\
RRM-32B                     & 94.7 & 81.1 & 85.7 & 60.2 & 84.4 & 60.8 & 73.9 & 34.4 & 83.6 & 75.4 & 73.4\\
\midrule
\multicolumn{11}{l}{\it White-box Judge/Reward LLMs} \\
\midrule
RM-R1-7B (Qwen-2.5-Inst)    & 83.0 & 70.0 & 56.3 & 55.2	& 71.3 & 55.2 & 64.2 & 20.6 & 76.2 & 65.2 & 61.7 \\
RM-R1-7B  (DeepSeek-Dist)   & 85.3 & 67.3 & 69.7 & 51.0 & 70.3 & 56.5 & 62.2 & 13.8 & 55.4 & 62.6 & 59.4 \\
RRM-7B	                    & 77.7 & 69.5 & 65.5 & 51.0 & 68.2 & 53.2 & 59.9 & 10.0 & 60.4	& 62.4 & 57.8 \\
JudgeLRM-7B                 & \bf 92.1 & 56.1 & 79.8 & 46.0 & 62.7 & 47.5 & 55.4 & 9.4  & 29.1 & 60.2 & 53.8 \\
\midrule
\multicolumn{11}{l}{\it Rubric-based Methods} \\
\midrule
Qwen-3-8B (Rubric+Judge pairwise)  & 73.9 & 63.6 & 63.0 & 53.8 & 74.6 & 55.6 & 64.2 & 21.9 & 56.6 & 61.8 & 58.9 \\
Qwen-3-8B (Rubric+Judge pointwise) & 69.3 & 66.4 & 57.5 & 52.6 & 69.9 & 50.2 & 61.2 & 27.0 & 62.9 & 55.5 & 57.3 \\
{\textsc{Rubric-RM}}               & 88.2 & 74.1 & 76.1 & 67.0 & 80.8 & 65.4 & 65.7 & 34.4 & 82.2 & 67.0 & 70.1 \\
{\textsc{Rubric-RM}}-voting@5      & 89.9 & 75.4 & 81.5 & 70.8 & 83.8 & 67.1 & 67.0 & 40.0 & \bf 86.5 & 67.5 & 73.0\\
{\name} w/o RL          & 87.4 & 71.0 & 77.8 & 69.5 & 75.9 & 65.5 & 64.8 & 40.3 & 73.3 & 67.5 & 69.3 \\
{\name} w/o RL-voting@5 & 87.7 & 73.3 & \bf 84.0 & 72.2 & 79.7 & 68.5 & 66.9 & 43.1 & 77.0 & 67.5 & 72.0 \\
\textsc{RIFL}           & 89.6 & 73.3 & 81.2 & 73.3 & 78.4 & 69.2 & 64.8 & 42.3 & 76.7 & 67.7 & 71.7 \\
\textsc{RIFL}-voting@5  & 89.7 & 74.1 & \bf 84.0 & 75.8 & 80.1 & 69.8 & 64.3 & 41.9 & 77.2 & 67.7 & 72.5 \\
\rowcolor{purple!10}
{\name}                 & 89.1 & 79.2 & 78.2 & 72.6 & 82.1 & 68.2 & 67.8 & 42.9 & 80.8 & 70.2 & 73.1 \\
\rowcolor{purple!10}
{\name}-voting@5        & 90.8 & \bf 81.8 & 79.8 & \bf 76.0 & \bf 84.0 & \bf 73.2 & \bf 68.6 & \bf 45.0 & 84.9 & \bf 72.0 & \bf 75.6 \\
\bottomrule
\end{tabular}
}
\vspace{-1ex}
\end{table*}
\subsection{Performance of {\name}}

Table~\ref{tab:main_results} shows that {\name} provides a strong white-box pointwise reward model for open-ended evaluation. Among white-box methods, {\name} achieves the best average performance, improving over the strongest rubric baseline \textsc{Rubric-RM} from 70.1 to 73.1, and further to 75.6 with voting@5. The gains are most pronounced on instruction-following and chat-oriented benchmarks, where {\name}-voting@5 achieves the best results on Chat Hard, PPE-IFEval, InfoBench, and IFBench. The improvement over the no-RL variant further confirms the benefit of our alternating RL procedure, indicating that jointly optimizing the rubric generator and judge yields more discriminative reward signals. Notably, {\name}-voting@5 also outperforms prompting-based judges, suggesting that learned rubric-conditioned evaluation is more reliable than direct prompting.

We next study robustness under best-of-\(N\) selection on 100 WildBench prompts from instruction-following and chat categories, using frontier-LLM scores under the dataset rubric as the oracle reference. As shown in Fig.~\ref{fig:bestofn}, Rubric-RM and Skywork begin to degrade after \(N=8\), suggesting sensitivity to reward overoptimization. In contrast, {\name} continues to closely track the oracle upper bound, indicating that its rewards remain well aligned when used for  offline selection.

\subsection{Ablation Study}

We perform two ablation studies to better understand where the improvements of {\name} come from. First, we remove the length penalty to examine whether controlling rubric verbosity is important for reliable rubric-conditioned judging. Second, we evaluate probability-based scoring against a random-probability control to verify whether the model-derived confidence scores provide meaningful tie-breaking signals.
\paragraph{Length penalty.}
Removing the length penalty consistently hurts performance, reducing the average score from 73.1 to 72.3 and voting@5 from 75.6 to 74.3 (Table~\ref{tab:ablation}). The drop is most visible on stricter instruction-following benchmarks such as Chat Hard and PPE-IFEval, suggesting that length regularization helps prevent overly verbose or redundant rubrics and improves reward accuracy.

\paragraph{Probability-based scoring.}
Table~\ref{tab:prob-ablation} shows that probability-based scoring breaks ties without sacrificing correctness. On InfoBench and IFBench, it resolves 98/140 and 68/115 tied cases while preserving most originally correct predictions (305/312 and 234/255). In contrast, a random-probability control resolves a similar number of ties but corrupts many correct decisions, preserving only 160/312 and 130/255. Thus, the improvement comes from meaningful judge confidence rather than arbitrary tie-breaking.

\begin{table*}[!t]
\centering
\renewcommand\arraystretch{0.95}
\caption{Ablation on the length penalty. Best results in \textbf{bold}.
\vspace{-1ex}
}
\label{tab:ablation}
\resizebox{0.99\textwidth}{!}{%
\begin{tabular}{l cc cccc c cc cc}
\toprule
\multirow{2.5}{*}{}
& \multicolumn{2}{c}{\bf RewardBench}
& \multicolumn{4}{c}{\bf IF Evaluation Benchmarks}
& \bf RM-Bench
& \multicolumn{2}{c}{\bf RewardBench2}
& \multirow{2.5}{*}{\bf HelpSteer3}
& \multirow{2.5}{*}{\bf Avg.} \\
\cmidrule(lr){2-3}  \cmidrule(lr){4-7} \cmidrule(lr){8-8} \cmidrule(lr){9-10}
                            & Chat & Chat Hard  & FollowBench   &   PPE-IFEval  &   InfoBench   &   IFBench     &    Chat        & Precise IF    &   Focus  &      &   \\
\midrule
{{\name} w/o length penalty}          & 87.5 & 75.6 & 79.7 & 69.6 & 81.5 & 69.0 & 66.1 & 43.9 & 80.6 & 69.3 & 72.3 \\
{{\name} w/o length penalty}-voting@5 & \bf 90.8 & 78.5 & \bf 81.5 & 73.2 & \bf 86.7 & 71.6 & \bf 68.7 & 38.1 & 82.6 & 71.2 & 74.3 \\
\rowcolor{purple!10}
{\name}                               & 89.1 & 79.2 & 78.2 & 72.6 & 82.1 & 68.2 & 67.8 & 42.9 & 80.8 & 70.2 & 73.1 \\
\rowcolor{purple!10}
{\name}-voting@5                      & \bf 90.8 & \bf 81.8 & 79.8 & \bf 76.0 & 84.0 & \bf 73.2 & 68.6 & \bf 45.0 & \bf 84.9 & \bf 72.0 & \bf 75.6 \\
\bottomrule
\end{tabular}
}
\end{table*}

\begin{table}[t]
\centering
\renewcommand\arraystretch{0.95}
\caption{
Ablation study on probability-based scoring on InFoBench~\citep{qin2024infobench} and IFBench~\citep{peng2025agentic}
\vspace{-1ex}}
\resizebox{0.7\columnwidth}{!}{%
\begin{tabular}{llccc}
\toprule
\bf Dataset & \bf Method
& \bf Tie $\rightarrow$ Correct
& \bf Correct Preserved
& \bf Wrong $\rightarrow$ Correct \\
\midrule
\multirow{2}{*}{InfoBench}
& Probability
& 98 / 140
& 305 / 312
& 5 / 36 \\
& Random
& 63 / 140
& 160 / 312
& 12 / 36 \\
\midrule
\multirow{2}{*}{IFBench}
& Probability
& 68 / 115
& 234 / 255
& 13 / 74 \\
& Random
& 67 / 115
& 130 / 255
& 38 / 74 \\
\bottomrule
\end{tabular}
}
\label{tab:prob-ablation}
\end{table}

\begin{table}[!t]
\centering
\caption{Trained-policy comparison on IFEval and InfoBench. Asterisks (\(^\star\)) denote results taken from \citet{viswanathan2026checklists,liu2025openrubrics}; underlined results were reproduced by us. Best in \textbf{bold}. \vspace{-1ex}
}
\label{tab:ifeval}
\resizebox{0.8\columnwidth}{!}{%
\arrayrulecolor{black!50}
\begin{tabular}{l|cc|cc|c|c}
\toprule
\multirow{2}{*}{\textbf{Model}}
& \multicolumn{2}{c|}{\textbf{IFEval (Prompt)}}
& \multicolumn{2}{c|}{\textbf{IFEval (Inst.)}}
& {\textbf{IFEval}}
& {\textbf{InfoBench}} \\
\cmidrule(lr){2-3}  \cmidrule(lr){4-5} \cmidrule(lr){6-6} \cmidrule(lr){7-7}
& \textbf{Loose} & \textbf{Strict} & \textbf{Loose} & \textbf{Strict} & \textbf{AVG} &  \textbf{AVG} \\
\midrule
GPT-4 (0314)${\star}$                              & 79.3 & 76.9 & 85.4 & 83.6 & 81.3 & 87.3 \\
AutoIF~\citep{dong2025self}               & 56.9 & 47.1 & 67.0 & 57.6 & 57.2 & 80.6 \\
UltraIF~\citep{an2025ultraif}             & 75.4 & 71.3 & 83.0 & 79.4 & 77.3 & 80.7 \\
RAIF~\citep{qin2026incentivizing}  &-- & -- & -- & -- & 70.1 & 82.7 \\
\midrule
Qwen2.5-7B-Instruct${\star}$          & 75.0 & 72.5 & 81.8 & 79.9 & 77.3 & 78.1 (\underline{76.0}) \\
+ SFT (Distilled)${\star}$            & 66.8 & 64.1 & 75.3 & 72.8 & 69.8 & 72.5 \\
+ DPO (via Skywork)${\star}$          & 75.7 & 68.0 & 83.2 & 78.5 & 76.0 & 82.0 \\
+ DPO (via ArmoRM)${\star}$           & 73.8 & 70.2 & 81.7 & 78.3 & 76.0 & 83.5 \\
+ DPO (via Ultrafbk.)${\star}$        & 71.5 & 69.1 & 79.9 & 77.7 & 74.6 & 80.0 \\
+ DPO (via AI Judge)${\star}$         & 73.0 & 68.9 & 80.9 & 77.8 & 75.2 & 76.1 \\
+ DPO (via RLCF)${\star}$             & 77.3 & 72.6 & 84.1 & 80.3 & 78.6 & 84.1 (\underline{81.5}) \\
+ IterDPO (via RLCF)            & 78.2& 74.3& 84.5& 81.1&79.5 &81.8   \\
+ DPO (via \textsc{Rubric-RM})${\star}$            & 78.2 & 73.9 & 84.5 & 81.2 & 79.5 & 83.0 \\
+ IterDPO (via \textsc{Rubric-RM})            &77.6&74.1&84.3&81.7&79.4&83.3  \\
+DPO (via {\name w/o RL}) & 76.0& 72.8& 82.9& 80.3&78.0 &82.3   \\
+IterDPO (via {\name w/o RL}) & 76.7& 74.1& 83.2& 81.2&79.4 &82.4   \\
+DPO (via \textsc{RIFL}) & 77.5& 74.1& 84.3& 81.8&79.4 &82.8   \\
+IterDPO (via \textsc{RIFL}) & 78.0& 74.9& 84.2& 81.5&79.7 &\bf84.1   \\
\midrule
\rowcolor{purple!10}
+ DPO (via \name)            & 79.7 & \bf75.6  & 85.3 & \bf82.1 & \bf80.7 & 83.6 \\
\rowcolor{purple!10}
+ IterDPO (via \name)            & \bf79.9 & 74.5 & \bf85.4 & 81.4 & 80.3  & 83.7 \\
\bottomrule
\end{tabular}%
}
\vspace{-1ex}
\end{table}

\begin{table}[t]
\centering
\caption{Trained-policy comparison on Arena-Hard and AlpacaEval under vanilla and style/length-controlled settings. Asterisks (\(^\star\)) denote results from \citet{viswanathan2026checklists,rezaei2025online,liu2025openrubrics}. Best in \textbf{bold}. \vspace{-0.5ex}
}
\label{tab:alpaca}
\resizebox{0.8\textwidth}{!}{%
\arrayrulecolor{black!50}
\begin{tabular}{l|cc|cc|c}
\toprule
\multirow{2}{*}{\textbf{Model}}
& \multicolumn{2}{c|}{\textbf{Arena-Hard}}
& \multicolumn{2}{c|}{\textbf{AlpacaEval}}
& \multirow{2}{*}{\textbf{AVG}} \\
\cmidrule(lr){2-3} \cmidrule(lr){4-5}
& \textbf{Vanilla} & \textbf{Style-Con}
& \textbf{Vanilla} & \textbf{Length-Con} &  \\
\midrule
GPT-4 (0314)${\star}$                 & 50.0 & 50.0 & 22.1 & 35.3 & 39.4 \\
UltraIF~\citep{an2025ultraif}   & 31.4 & -- & -- & -- & -- \\
\midrule
Qwen2.5-7B-Instruct${\star}$         & 51.3 & 42.8 & 33.5 & 36.2 & 41.0 \\
+ SFT (Distilled)${\star}$           & 32.6 & 29.2 & 36.1 & 33.3 & 32.8 \\
+ DPO (via Skywork)${\star}$         & 55.1 & 50.3 & 44.8 & 41.5 & 47.9 \\
+ DPO (via ArmoRM)${\star}$           & 50.8 & 46.4 & 37.6 & 38.1 & 43.2 \\
+ DPO (via Ultrafbk.)${\star}$        & 52.8 & 47.9 & 33.7 & 38.7 & 43.3 \\
+ DPO (via AI Judge)${\star}$         & 51.0 & 44.4 & 28.8 & 33.4 & 39.4 \\
+ DPO (via RLCF)${\star}$             & 54.6 & 48.4 & 36.2 & 37.1 & 44.1 \\
+ IterDPO (via RLCF)            & 51.1&54.6&38.9&39.2&46.0 \\
+ DPO (via \textsc{Rubric-RM})${\star}$            & 52.9 & 53.1 & 47.0 & 41.3 & 48.6 \\
+ IterDPO (via \textsc{Rubric-RM})            &56.3&56.7&50.1&42.0&51.3  \\
+ RL (via \textsc{OnlineRubrics})${\star}$            & 56.5 & -- &\bf  55.0 & 30.4& -- \\
+ DPO (via {\name w/o RL}) &52.6&54.3&40.7&42.1&47.4  \\
+ IterDPO (via {\name w/o RL}) &55.8&56.9&42.7&42.1&49.4  \\
+ DPO (via \textsc{RIFL}) &52.4&53.4&43.2&41.1&47.5  \\
+ IterDPO (via \textsc{RIFL}) &55.2&55.7&45.6&42.4&49.7  \\
\midrule
\rowcolor{purple!10}
+ DPO (via \name)            & 53.5 & 54.3 & 49.1 & 45.7 & 50.7 \\
\rowcolor{purple!10}
+ IterDPO (via \name)            & \bf 57.5 &  \bf57.7 & 51.4 & \bf 45.5 & \bf 53.0 \\
\bottomrule
\end{tabular}%
}
\vspace{-1ex}
\end{table}

\begin{table}[t]
\centering
\caption{Trained-policy comparison on WildBench, with task-specific and macro WB scores. Asterisks (\(^\star\)) denote results from \citet{wang2025drift,xu2026alternating}. Best in \textbf{bold}.
}
\label{tab:wildbench}
\resizebox{0.8\columnwidth}{!}{%
\begin{tabular}{@{}l|cccccc@{}}
\toprule
\textbf{Method} & \textbf{Creative} & \textbf{Planning} & \textbf{Math} & \textbf{Info seeking} & \textbf{Coding} & \textbf{WB Score} \\ \midrule
Claude-3.5-Sonnet (20240620)$^{\star}$                       & 55.6     & 55.6     & 50.2 & 55.5         & 56.5   & 54.7     \\
GPT-4-turbo (20240409)$^{\star}$                             & 58.7     & 56.2     & 51.0 & 57.2         & 55.1   & 55.2     \\
GPT-4o-mini (20240718)$^{\star}$                           & 60.1     & 58.2     & 54.0 & 57.4         & 57.2   & 57.1     \\ \midrule
Qwen2.5-7B-Instruct$^{\star}$                     & 50.1     & 51.8     & 47.1 & 50.7         & 45.0   & 48.7     \\
+ DRIFT$^{\star}$                                  & 52.5     & 53.2     & 50.6 & 52.4         & 50.3   & 51.7    \\
+ SPIN$^{\star}$ & 43.3 & 45.5 & 41.6 & 46.3 & 39.1 & 42.9 \\
+ IterDPO$^{\star}$ (via OpenAssistant) & 46.8 & 48.6 & 44.5 & 48.0 & 44.3 & 46.3 \\
+ DPO$^{\star}$ (via RLCF)                         & 51.4     & 52.7     & 49.0 & 51.3         & 48.8   & 50.5     \\
+ IterDPO$^{\star}$ (via RLCF)            & 51.9&52.6&47.8&51.4&46.5&49.7 \\
+ DPO (via \textsc{Rubric-RM})                    & 54.8        & 55.5        & 51.5    & 54.1            & 52.9      & 53.6       \\
+ IterDPO (via \textsc{Rubric-RM})            &57.0&56.2&50.6&54.9&52.8&54.0  \\
+ DPO (via {\name w/o RL})&53.3	&53.9	&46.8	&54.4	&51.6	&51.8  \\
+ IterDPO (via {\name w/o RL})&54.5&	55.7&	51.2&	54.6&	52.8&	53.2    \\
+ DPO (via \textsc{RIFL})&55.0&	55.2&	50.7&	54.0&	51.0&	52.6   \\
+ IterDPO (via \textsc{RIFL})&55.1&	57.0&	52.8&	55.1&	52.5&	54.0   \\
\midrule
\rowcolor{purple!10}
+ DPO (via {\name})     & 54.9&	55.8&	52.2&	54.8&	51.4&	53.6        \\
\rowcolor{purple!10}
+ IterDPO (via {\name}) & \bf56.4&	\bf57.9&	\bf52.6&	\bf56.6&	\bf53.3&	\bf55.2     \\
\bottomrule
\end{tabular}%
}
\vspace{-1ex}
\end{table}
\subsection{Performance of offline RL}

We investigate whether the gains of {\name} generalize to downstream offline policy learning.

\paragraph{Instruction-Following Evaluation.}
Table~\ref{tab:ifeval} and Fig.~\ref{fig:ifbench} show that policies trained with \name-derived rewards yield consistent instruction-following gains. On IFEval, DPO via {\name} lifts the average score to 80.7, outperforming both Rubric-RM and the {\name} variant without RL, while remaining competitive on InfoBench (83.6 with DPO, 83.7 with IterDPO). The advantage is more pronounced on IFBench (Fig.~\ref{fig:ifbench}): {\name} reaches 37.4 with DPO and 36.7 with IterDPO, outperforming Rubric-RM under both settings.

\paragraph{Human preference alignment evaluation.}
\name-trained rewards also transfer to preference-alignment benchmarks (Table~\ref{tab:alpaca}, Table~\ref{tab:wildbench}). On Arena-Hard and AlpacaEval, DPO via {\name} reaches an average of 50.7 and IterDPO further improves it to 53.0, achieving the best results on Arena-Hard (both vanilla and style-controlled) and on AlpacaEval length-controlled. On WildBench, IterDPO via {\name} reaches the best macro score of 55.2 (53.6 with DPO), surpassing IterDPO via Rubric-RM by 2.2\% relative.

\paragraph{Creative writing evaluation.}
On the Creative Writing Benchmark v3 (Fig.~\ref{fig:cw}), policies trained with {\name} lead all compared methods: DPO via {\name} reaches 39.8 and IterDPO further improves the score to 40.5. \name-based optimization also outperforms creative-writing baselines such as RaR, Rubric-based RL, and RuscaRL, as well as Rubric-RM-based DPO and IterDPO. This indicates that \name-derived rewards generalize beyond instruction-following and preference benchmarks to open-ended generation.

\begin{figure}[t!]
  \centering
   \vspace{-0.5ex}
  \includegraphics[width=0.95\linewidth]{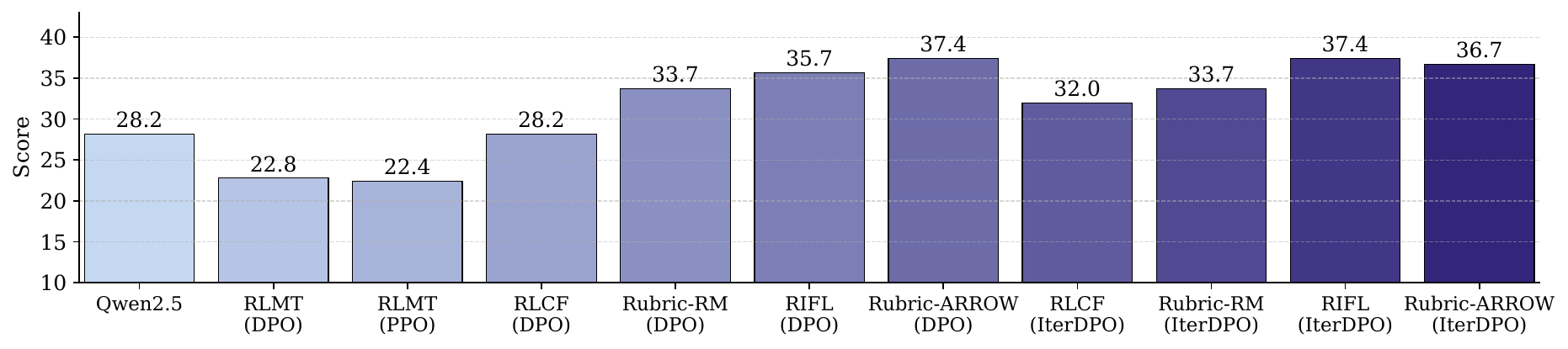}
\caption{Trained-policy comparison on IFBench. Baseline results (except Rubric-RM IterDPO) are from OpenRubrics~\cite{liu2025openrubrics}; two baselines are omitted for readability (see Fig.~\ref{fig:ifbench_full} in Appendix~\ref{app:experiment} for the full version).\vspace{-1ex}}
  \label{fig:ifbench}
\end{figure}

\begin{figure}[t!]
  \centering
  \includegraphics[width=0.95\linewidth]{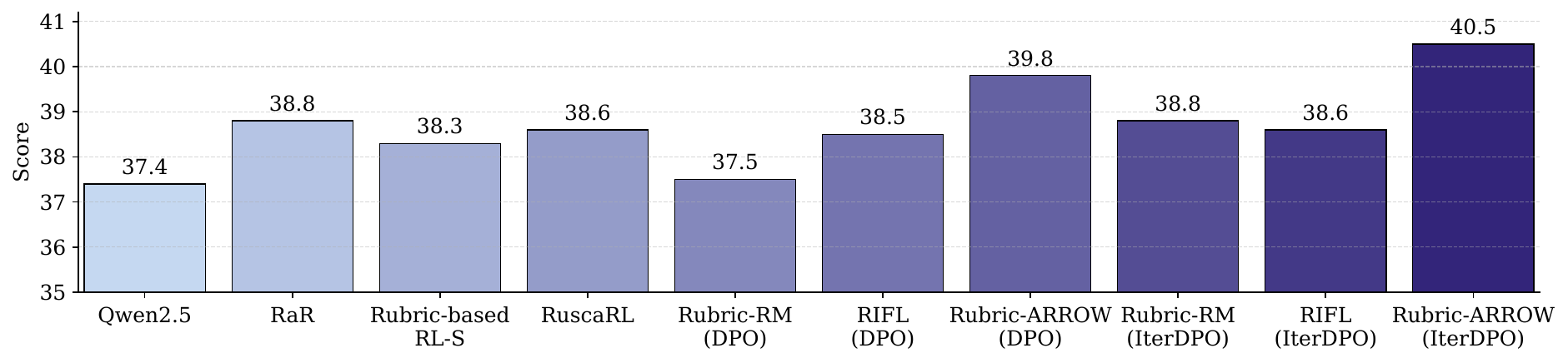}
  \caption{Trained-policy comparison on Creative Writing Benchmark v3. Baseline results (except Rubric-RM) are from RuscaRL~\cite{zhou2025breaking}; two baselines are omitted for readability (see Fig.~\ref{fig:cw_full} in Appendix~\ref{app:experiment} for the full version).\vspace{-1.5ex}}
  \label{fig:cw}
\end{figure}

\subsection{Performance of online RL}

\begin{table}[t!]
\centering
\caption{Comparison of online RL method with different alignment strategies applied to Qwen2.5-7B-Instruct on instruction following and preference alignment benchmarks. Best results are in \textbf{bold}.\vspace{-0.5ex}}
\label{tab:online_rl}
\resizebox{0.8\columnwidth}{!}{%
\begin{tabular}{l|cc|cc|cc|c}
\toprule
\multirow{2}{*}{Method} & \multicolumn{2}{c}{\bf IFEval (Prompt)} & \multicolumn{2}{c|}{\bf IFEval (Inst.)}   & \multicolumn{2}{c|}{\bf AlpacaEval}      & \multirow{2}{*}{\bf AVG} \\ \cmidrule(lr){2-5} \cmidrule(lr){6-7}
                       & \bf Loose &\bf  Strict &\bf  Loose  &\bf  Strict                                               &\bf  Vanilla & \bf Length                    &                      \\ \midrule
\multicolumn{1}{l|}{Qwen2.5-7B-Instruct}    & 75.0  & \multicolumn{1}{c|}{72.5}   & 81.8   & \multicolumn{1}{c|}{79.9}  & 33.5    & \multicolumn{1}{c|}{36.2} & 56.1                 \\
\multicolumn{1}{l|}{+GRPO (RM-R1)}          & 76.7  & \multicolumn{1}{c|}{73.6}   & 83.2   & \multicolumn{1}{c|}{80.2}  & \bf53.2    & \multicolumn{1}{c|}{42.7} & 63.2                 \\
\multicolumn{1}{l|}{+GRPO ({\name} w/o RL)}          &76.9   & \multicolumn{1}{c|}{74.5}   &83.5    & \multicolumn{1}{c|}{81.2}  & 51.4    & \multicolumn{1}{c|}{44.9} &   63.6               \\
\multicolumn{1}{l|}{+GRPO (RIFL)}          & 76.7  & \multicolumn{1}{c|}{73.8}   & 83.1   & \multicolumn{1}{c|}{80.6}  & 47.8    & \multicolumn{1}{c|}{42.4} & 61.9                 \\
\multicolumn{1}{l|}{+GRPO (Skywork)}          & 76.0  & \multicolumn{1}{c|}{72.3}   & 83.1   & \multicolumn{1}{c|}{80.3}  & 50.5    & \multicolumn{1}{c|}{45.6} & 63.0                 \\
\rowcolor{purple!10}
+GRPO ({\name})            & \bf 79.3  &\bf  76.2                        &\bf  84.9   &\bf  82.5                                            &  50.9    &\bf  49.3                      & \bf 65.4                 \\ \bottomrule
\end{tabular}%
}
\vspace{-2ex}
\end{table}

We evaluate {\name} in an online RL setting by directly optimizing Qwen2.5-7B-Instruct with GRPO using different reward models. As shown in Table~\ref{tab:online_rl}, GRPO via {\name} delivers the best overall average, improving from 56.1 for the base model to 65.4. It also outperforms reward baselines such as RM-R1, RIFL, and Skywork. Compared with the {\name} variant without RL, {\name} further improves the average score from 63.6 to 65.4. These results indicate that {\name} provides a stronger and more reliable reward signal for online policy optimization.

\subsection{Efficiency Comparison}

\begin{table}[t]
\centering
\small
\caption{Computing speed on 100 samples (vLLM). Results with ``$\star$'' were taken from \citep{xu2026alternating}{}.\vspace{-1ex}}
\label{tab:computing-speed}
\resizebox{0.6\columnwidth}{!}{%
\begin{tabular}{l c}
\toprule
 \textbf{Method}& \textbf{Compute}  \textbf{Time} \textbf{(s)} \\
\midrule
JudgeLRM-7B$^{\star}$ & 25.71 \\
RRM-7B$^{\star}$ & 203.40 \\
RM-R1-7B (Qwen-2.5-Inst)$^{\star}$ & 260.37 \\
RM-R1-7B (DeepSeek-Dist)$^{\star}$ & 170.76 \\
RM-R1-14B (Qwen-2.5-Inst)$^{\star}$ & 322.79 \\
RM-R1-14B (DeepSeek-Dist)$^{\star}$ & 382.02 \\
\textsc{Rubric-RM-8B}$^{\star}$ & 105.12 \\
\midrule
\rowcolor{purple!15}
\name-8B & 28.35 \\
\bottomrule
\end{tabular}
}
\vspace{-1ex}
\end{table}

We evaluate inference efficiency on 100 samples with vLLM (Table~\ref{tab:computing-speed}): \name-8B processes them in 28.35s, an order of magnitude faster than reasoning-based reward models (RRM-7B and RM-R1-7B/14B, all $>$170s) and substantially faster than Rubric-RM-8B (105.12s). Although JudgeLRM-7B is slightly faster (25.71s), it produces only a direct judgment without rubric-conditioned signals. \name{} thus preserves interpretability at a fraction of the reasoning-based inference cost.

\subsection{Case Study}

We qualitatively analyze failures of a baseline reward model on a challenging example. Table~\ref{tab:case-study} in Appendix~\ref{app:experiment} presents a comparison instance about the difference between birding and bird watching. RIFL produces several surface-level rubrics (e.g., conciseness, clear organization, neutral tone, and whether a direct comparison is made), and its judge marks the inaccurate Resp B as True on most of them; the response wins on aggregate even though the judge separately flags its factual inaccuracy. However, \name{} produces a focused set of hard-rule rubrics centered on accurately addressing the question, and the judge correctly marks Resp A as True and Resp B as False on the central direct-comparison rubric, yielding the right preference.

\section{Conclusion}
\label{sec:conclusion}

We propose {\name}, an alternating framework that trains a pointwise rubric reward model for non-verifiable domains. After a brief SFT warm-up, its alternating RL stage couples a probability-based scoring rule that reduces ties with phase-specific preference-based rewards and alternating GRPO to jointly train the rubric generator and the rubric-conditioned judge using only pairwise preference data, with no further frontier-LLM annotation. Theoretical analysis establishes preference consistency, variance reduction from opposite-side averaging, and phase-wise convergence guarantees for alternating GRPO. Empirically, {\name} attains the best average across reward-modeling benchmarks and yields the best downstream policies under both offline (DPO, IterDPO) and online (GRPO) training.

\bibliography{main}

\appendix
\section{Experiment}
\label{app:experiment}
\subsection{Implementation Details}

Tables~\ref{tab:hyperparameters-rubric} and~\ref{tab:hyperparameters-dpo} summarize the hyperparameters used for \name{} and policy model training, respectively. In rubric-based scoring, we assign a weight of 3 to each \textit{Hard Rule} and a weight of 1 to each \textit{Principle} when computing the final score. We train the GRPO models using the ms-swift library\footnote{\url{https://github.com/modelscope/ms-swift}}~\citep{zhao2025swift}, and conduct DPO and IterDPO training with LLaMA-Factory\footnote{\url{https://github.com/hiyouga/LLaMA-Factory}}~\citep{zheng2024llamafactory}. In total, \name{} is trained for three alternating RL iterations. The inference-time sampling parameters are reported in Table~\ref{tab:parameters-inference}. For baseline methods, we use the sampling configurations specified in their official implementations or original papers.

\begin{table}[h!]
\centering
\small
\caption{Hyper-parameters used in \name{} training.}
\resizebox{\columnwidth}{!}{%
\begin{tabular}{c|c|c|c|c|c}
\toprule
Module & Parameter & Value & Module & Parameter & Value \\
\midrule
\multirow{11}{*}{Rubric Generator}
& \#generations & 6
& \multirow{11}{*}{Judge}
& \#generations & 6 \\
& Cutoff Length & 512
& & Cutoff Length & 1024 \\
& Batch Size & 64
& & Batch Size & 32 \\
& Optimizer & AdamW
& & Optimizer & AdamW \\
& Learning Rate & 1e-6
& & Learning Rate & 1e-6 \\
& Temperature & 1.0
& & Temperature & 1.0 \\
& \#iterations & 2
& & \#iterations & 2 \\
& Epochs & 1
& & Epochs & 1 \\
& $\epsilon_{\mathrm{high}}$ & 0.28
& & $\epsilon_{\mathrm{high}}$ & 0.28 \\
& $\epsilon_{\mathrm{low}}$ & 0.2
& & $\epsilon_{\mathrm{low}}$ & 0.2 \\
& $\beta$ & 0.001
& & $\beta$ & 0.001 \\
\bottomrule
\end{tabular}%
}
\label{tab:hyperparameters-rubric}
\end{table}

\begin{table}[t]
\centering
\small
\caption{Hyper-parameters used in policy model training.}
\resizebox{\columnwidth}{!}{%
\begin{tabular}{c|c|c|c|c|c}
\toprule
Method & Parameter & Value & Method & Parameter & Value \\
\midrule
\multirow{11}{*}{DPO}
& Cutoff Length & 2048
& \multirow{11}{*}{GRPO}
& \#generations & 6 \\
& Batch Size & 64
& & Cutoff Length & 2048 \\
& Optimizer & AdamW
& & Batch Size & 64 \\
& Learning Rate & 8e-7
& & Optimizer & AdamW \\
& Epochs & 1
& & Learning Rate & 5e-7 \\
& beta & 0.1
& & Temperature & 1.0 \\
& SFT mixing weight & 0.3
& & \#iterations & 2 \\
& / & /
& & Epochs & 1 \\
& / & /
& & $\epsilon_{\mathrm{high}}$ & 0.28 \\
& / & /
& & $\epsilon_{\mathrm{low}}$ & 0.2 \\
& / & /
& & $\beta$ & 0.001 \\
\bottomrule
\end{tabular}%
}
\label{tab:hyperparameters-dpo}
\end{table}

\begin{table}[t]
\centering
\small
\caption{Sampling parameters used in \name{} inference.}
\resizebox{\columnwidth}{!}{%
\begin{tabular}{c|c|c|c|c|c}
\toprule
Module & Parameter & Value & Module & Parameter & Value \\
\midrule
\multirow{5}{*}{Rubric Generator}
& Maximum Tokens & 512
& \multirow{5}{*}{Judge}
& Maximum Tokens & 1024 \\
& Temperature & 0.0
& & Temperature & 1.0 \\
& Top-P & /
& & Top-P & 0.95 \\
& Top-K & /
& & Top-K & -1 \\
& Enable-thinking & False
& & Enable-thinking & False \\
\bottomrule
\end{tabular}%
}
\label{tab:parameters-inference}
\end{table}

\subsection{Evaluation Details}
\label{app:eval-details}

\paragraph{Pairwise construction for FollowBench and InfoBench.} For these two benchmarks, we convert the original single-response evaluation into a pairwise comparison setting: for each prompt, two responses are sampled from a fixed source model (Qwen-3-8B or Qwen-3-14B), and constraint violations are identified using each benchmark's official verifier.

\paragraph{Splits and metrics.} We follow each benchmark's official splits and scoring rules and report accuracy, win rate, or the benchmark-specific metric.

\subsection{Baselines}
\label{app:baselines}

\paragraph{Reward-model evaluation baselines.}
We compare \name\ with state-of-the-art white-box generative judge and reward models, including RM-R1~\citep{chen2025rm}, RRM~\citep{guo2026reward}, and JudgeLRM~\citep{chen2025judgelrm}. We also include black-box references for context: Claude-3.5-Sonnet$\footnote{\url{https://www.anthropic.com/news/claude-3-5-sonnet}}$, Gemini-2.5-Flash~\cite{comanici2025gemini}, and API-based rubric+judge pipelines (pairwise, pointwise, and direct-judge variants, with GPT-4.1-Mini$\footnote{\url{https://openai.com/index/gpt-4-1/}}$ as the rubric generator and Gemini-2.5-Flash-Lite as the judge). To isolate the effect of learned rubric-conditioned pointwise scoring, we further compare with prompted Qwen-3-8B rubric+judge pipelines (pairwise and pointwise) and with trained rubric-based baselines \textsc{Rubric-RM}~\citep{liu2025openrubrics}, {\name} w/o RL, and \textsc{RIFL}~\citep{he2025advancedif}.

\paragraph{Downstream policy training baselines.}
For downstream policy training, we provide training signals for Qwen2.5-7B-Instruct~\citep{qwen2.5} using different reward models. Skywork~\citep{liu2024skywork} and ArmoRM~\citep{wang2024interpretable} are scalar Bradley--Terry reward models trained on pairwise human preferences; for Skywork we use the \texttt{Skywork/Skywork-Reward-V2-Llama-3.1-8B} checkpoint. UltraFeedback~\citep{cui2023ultrafeedback} provides multi-aspect AI feedback annotations used as a preference dataset. RLCF~\citep{viswanathan2026checklists} is a checklist-based reward signal that converts explicit constraints into a scalar reward. We also compare against rubric-based reward models \textsc{Rubric-RM}, {\name} w/o RL, and \textsc{RIFL}.

\subsection{Length-Aware Rubric Reward}
\label{app:length-reward}

The rubric-generator reward \(R_{i,m}^{(r)}\) in Eq.~\eqref{eq:generator-rollout-reward} combines preference consistency with a length-aware bonus. Concretely, let
\begin{equation}
\mathcal C_i = \{m \in \{1,\ldots,n\} : q_i(s_{i,m}^1 - s_{i,m}^2) > 0\}
\end{equation}
denote the set of preference-consistent rubric rollouts for pair \(i\), and let
\begin{equation}
K_{i,m} = |\mathbf r_{i,m}|
\end{equation}
denote the number of rubric items in rollout \(m\). When \(\mathcal C_i \ne \emptyset\), we further define
\begin{equation}
K_i^{\min} = \min_{m \in \mathcal C_i} K_{i,m},
\qquad
\bar K_i = \frac{1}{|\mathcal C_i|} \sum_{m \in \mathcal C_i} K_{i,m}.
\end{equation}
The rubric-generator reward used in practice is
\begin{equation}
R_{i,m}^{(r)} =
\begin{cases}
1.1, & \text{if } m \in \mathcal C_i,\ K_{i,m} = K_i^{\min},\ \bar K_i \ge 5, \\
+1, & \text{if } m \in \mathcal C_i, \\
-1, & \text{otherwise.}
\end{cases}
\label{eq:generator-reward-detail}
\end{equation}
If \(\mathcal C_i = \emptyset\), all rubric rollouts receive reward \(-1\). This length-aware bonus rewards the shortest rubric set among those that correctly recover the human preference, while the condition \(\bar K_i \ge 5\) prevents the generator from being rewarded for producing overly short rubrics.

\subsection{Full Version of Policy Model Performance on IFBench and Creative Writing Benchmark v3}

Figs.~\ref{fig:ifbench_full} and \ref{fig:cw_full} report the full versions of the IFBench and Creative Writing v3 policy-model comparisons abbreviated in the main text, including the two baselines that were omitted from the main-body figures for readability.

\begin{figure}[h!]
  \centering
   \vspace{-0.5ex}
  \includegraphics[width=1\linewidth]{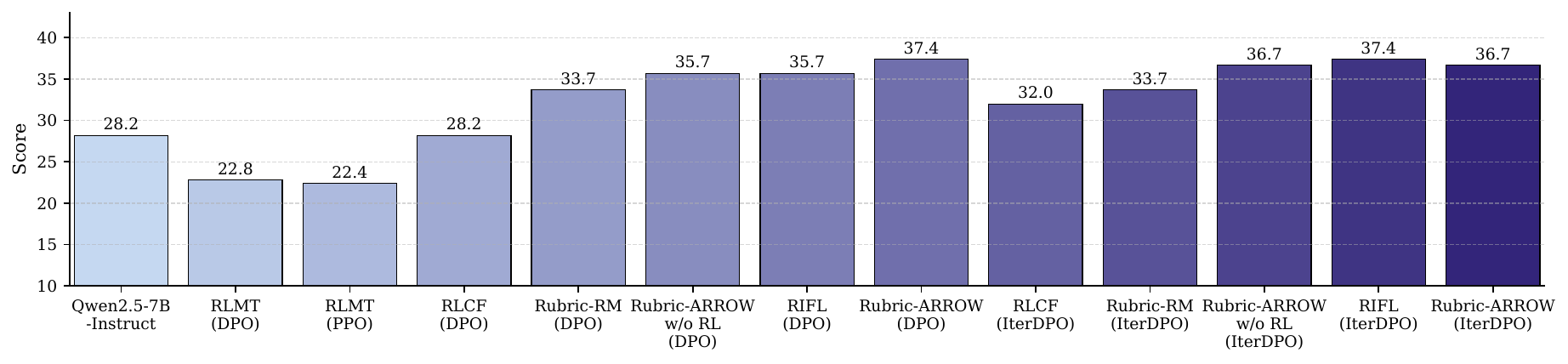}
\caption{Comparison of trained policy models on IFBench. Results of baselines except Rubric-RM (IterDPO) are from OpenRubrics~\cite{liu2025openrubrics}.\vspace{-1ex}}
  \label{fig:ifbench_full}
\end{figure}

\begin{figure}[h!]
  \centering
  \includegraphics[width=1\linewidth]{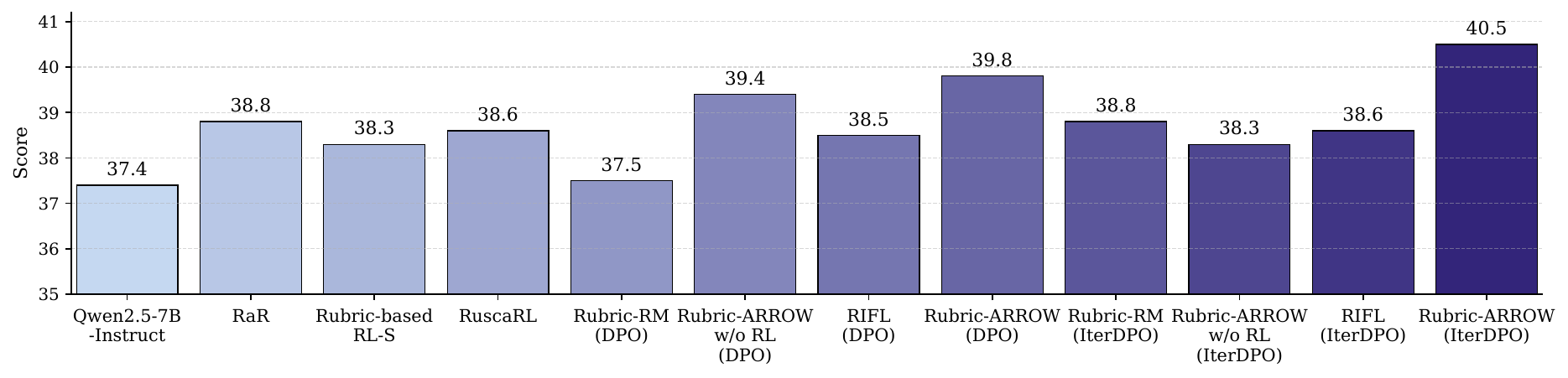}
  \caption{Comparison of trained policy models on Create Writing Benchmark v3. Results of baselines except Rubric-RM are from RuscaRL~\cite{zhou2025breaking}. \vspace{-1.5ex}}
  \label{fig:cw_full}
\end{figure}

\subsection{Case Study}

Table~\ref{tab:case-study} expands the case study from the main text, showing the prompt, the two candidate responses, and the full rubric and judge output for both RIFL and \name{}.

\begin{table*}[t]
\centering
\scriptsize
\caption{Case study with rubric-level error highlighting. The baseline marks several surface-level criteria as true for the incorrect response, while \name{} aligns each judge decision with the corresponding rubric and rejects the misleading comparison.}
\renewcommand{\arraystretch}{1.12}
\begin{tabular}{p{0.28\linewidth} p{0.48\linewidth} p{0.08\linewidth} p{0.08\linewidth}}
\toprule
\multicolumn{4}{l}{\textbf{Case Study on RewardBench Chat Hard}} \\
\midrule
\multicolumn{4}{p{0.96\linewidth}}{
\textbf{Prompt:} What is the difference between birding and bird watching?

\textbf{Resp A:} Birding is more active and committed, often involving documenting birds, while bird watching is a more casual or passive activity.

\textbf{Resp B:} Birding and bird watching both involve observing birds, but birding is done with the eyes, while bird watching is done with the ears.
} \\
\midrule

\multicolumn{4}{l}{\bf \textsc{RIFL}} \\
\midrule
\textbf{Rubric} & \textbf{Judge on Resp B} & \textbf{Resp A} & \textbf{Resp B} \\
\midrule

The response must directly compare birding and bird watching. [Hard Rule]
& ``The reply gives a direct comparison/contrast between birding and bird watching
(\textcolor{red}{\textbf{one uses eyes, the other uses ears}}), so it does compare the concepts,
\textcolor{red}{\textbf{albeit incorrectly}}.''
& True & \textcolor{red}{\textbf{True}} \\

The response should provide clear and accurate definitions for both concepts. [Principle]
& ``The assistant did not provide clear, accurate definitions for either concept; birding and bird watching are both defined incorrectly and imprecisely.''
& False & False \\

The response should identify key distinctions between birding and bird watching. [Principle]
& ``The stated distinctions are inaccurate and not the key distinctions usually cited, such as commitment, equipment, or identification/logging goals.''
& True & False \\

The response should use appropriate terminology. [Principle]
& ``Terminology is imprecise and inappropriate (`eyes' vs. `ears'); correct terms would include equipment, goals, habitat, or level of commitment.''
& False & False \\

The response should be clearly organized. [Principle]
& ``The response is organized simply, with a comparison followed by a contrast, and \textcolor{red}{\textbf{guides the reader clearly despite being content incorrect}}.''
& False & \textcolor{red}{\textbf{True}} \\

The response should be concise and avoid unnecessary information. [Principle]
& ``The answer is \textcolor{red}{\textbf{concise}} and contains no unnecessary information.''
& False & \textcolor{red}{\textbf{True}} \\

The response should maintain an objective and neutral tone. [Principle]
& ``Tone is \textcolor{red}{\textbf{objective and neutral}} throughout.''
& False & \textcolor{red}{\textbf{True}} \\

The response should be factually accurate. [Principle]
& ``The content is factually inaccurate regarding the distinctions between birding and bird watching.''
& False & False \\
\midrule
\multicolumn{4}{l}{\bf \textsc{\name{}}} \\
\midrule
\textbf{Rubric} & \textbf{Judge on Resp B} & \textbf{A} & \textbf{B} \\
\midrule

The response must directly address the user's question about the difference between birding and bird watching. [Hard Rule]
& ``The reply mentions both are about observing birds but does not accurately explain the real semantic distinction. The \textcolor{green!50!black}{\textbf{key difference in meaning is not correctly captured}}.''
& \textcolor{green!50!black}{\textbf{True}} & \textcolor{green!50!black}{\textbf{False}} \\

The response must provide clear and accurate definitions of both birding and bird watching. [Principle]
& ``The response lacks clear definitions for each term; it gives trivial distinctions rather than concise definitions of what birding and bird watching typically entail in practice.''
& False & False \\

The response must focus on the primary distinctions between the two terms, such as activity level, scope, goals, record-keeping, expertise, or equipment. [Principle]
& ``The stated methods (eyes vs. ears) do not reflect the primary objectives or typical scope differences. It fails to focus on the actual primary distinctions such as activity level, year-round involvement, and context.''
& False & False \\

\bottomrule
\end{tabular}
\label{tab:case-study}
\end{table*}

\section{Proofs of Theoretical Results}\label{app:proofs}
This appendix provides the technical assumptions and full proofs omitted from the main text.

\subsection{Auxiliary setup}
For the judge-side results, define
\begin{equation}
J_n
=
\frac{1}{2}\Prob\bigl(S^+>\bar{S}_n^-\bigr)
+
\frac{1}{2}\Prob\bigl(\bar{S}_n^+>S^-\bigr),
\label{eq:appendix-Jn}
\end{equation}
where \(\bar{S}_n^+\) and \(\bar{S}_n^-\) are the averages of \(n\) i.i.d. copies of \(S^+\) and \(S^-\), respectively.

For the optimization results, fix a phase \(b\in\{j,r\}\) and hold the other module \(\theta_{-b}\) constant throughout the phase. The phase-specific population objective is
\begin{equation}
J_b(\theta_b;\theta_{-b})
=
\begin{cases}
J_j(\theta_j;\theta_r), & b=j,\\
J_r(\theta_r;\theta_j), & b=r.
\end{cases}
\end{equation}
Let \(L_{G,b}(\theta_b;\theta_{-b})\) denote the one-step on-policy GRPO surrogate for phase \(b\), and let the GRPO iterates satisfy
\begin{equation}
\theta_b^{t+1} = \theta_b^t + \eta_t g_t^{(b)},
\qquad
t=0,1,\dots,T-1,
\label{eq:appendix-phase-update}
\end{equation}
where \(g_t^{(b)}\) is the stochastic gradient estimator for the active phase.

We use the following assumptions for Theorems~\ref{thm:surrogate-grpo} and \ref{thm:true-objective-neighborhood}.

\paragraph{A1. Phase-wise smoothness and upper boundedness.}
For each phase \(b\in\{j,r\}\), \(L_{G,b}(\theta_b;\theta_{-b})\) is \(L_b\)-smooth in \(\theta_b\) when \(\theta_{-b}\) is fixed, and \(L_{G,b}(\theta_b;\theta_{-b})\le L_{G,b}^{\star}\).

\paragraph{A2. Unbiased phase-wise gradient estimator.}
At each step,
\begin{equation}
\E[g_t^{(b)}\mid \theta_b^t]
=
\nabla_{\theta_b} L_{G,b}(\theta_b^t;\theta_{-b}).
\label{eq:appendix-unbiased}
\end{equation}

\paragraph{A3. Bounded conditional variance.}
There exists a constant \(\sigma_b^2\) such that
\begin{equation}
\E\!\left[\norm{g_t^{(b)}-\nabla_{\theta_b}L_{G,b}(\theta_b^t;\theta_{-b})}^2 \mid \theta_b^t\right]
\le
\frac{\sigma_b^2}{G_b},
\label{eq:appendix-var}
\end{equation}
where \(G_j=2n\) for the judge phase and \(G_r=n\) for the generator phase.

\paragraph{A4. Step size.}
We use
\begin{equation}
\eta_t=\frac{\eta_0}{\sqrt{T}},
\qquad
\eta_0\le \frac{1}{L_b}.
\label{eq:appendix-stepsize}
\end{equation}

\paragraph{A5. Phase-wise surrogate-to-true gradient gap.}
For each phase \(b\in\{j,r\}\), there exist nonnegative constants \(B_{\mathrm{grp}}^{(b)}\), \(B_{\mathrm{stale}}^{(b)}\), and \(B_{\mathrm{clip}}^{(b)}\) such that
\begin{align}
\notag&\sup_{\theta_b}
\E\norm{\nabla_{\theta_b}J_b(\theta_b;\theta_{-b})-\nabla_{\theta_b}L_{G,b}(\theta_b;\theta_{-b})}^2
\\&\le
\frac{B_{\mathrm{grp}}^{(b)}}{G_b} + B_{\mathrm{stale}}^{(b)} + B_{\mathrm{clip}}^{(b)}.
\label{eq:appendix-gradient-gap}
\end{align}

\subsection{Proof of Theorem~\ref{thm:pref-consistency}}
\begin{proof}
We first analyze the term \(\Prob(S^+>\bar{S}_n^-)\). Since
\begin{equation}
S^+-\bar{S}_n^- = (\mu^+-\mu^-) + \bigl(\xi^+ - \bar{\xi}_n^-\bigr)
= \Delta + Z_n^-,
\end{equation}
where \(\bar{\xi}_n^-:=\frac{1}{n}\sum_{m=1}^{n}\xi_m^-\) and \(Z_n^-:=\xi^+-\bar{\xi}_n^-\), we have
\begin{equation}
\Prob(S^+>\bar{S}_n^-)
=
\Prob\bigl(Z_n^->-\Delta\bigr).
\label{eq:appendix-term1}
\end{equation}
Because \(\xi^+\) and each \(\xi_m^-\) are independent and symmetric about \(0\), the average \(\bar{\xi}_n^-\) is symmetric about \(0\), and therefore \(Z_n^- = \xi^+ - \bar{\xi}_n^-\) is also symmetric about \(0\). Since \(Z_n^-\) is continuous, its median is \(0\), which implies
\begin{equation}
\Prob(Z_n^->0)=\frac{1}{2}.
\end{equation}
Moreover, the map \(c\mapsto \Prob(Z_n^->c)\) is strictly decreasing in \(c\). Therefore,
\begin{equation}
\Prob\bigl(Z_n^->-\Delta\bigr)
\begin{cases}
>\frac{1}{2}, & \Delta>0,\\
=\frac{1}{2}, & \Delta=0,\\
<\frac{1}{2}, & \Delta<0.
\end{cases}
\label{eq:appendix-term1-sign}
\end{equation}

The second term is analogous. Writing
\begin{equation}
\bar{S}_n^+ - S^- = \Delta + Z_n^+,
\qquad
Z_n^+ := \bar{\xi}_n^+ - \xi^-,
\end{equation}
we again obtain that \(Z_n^+\) is continuous and symmetric about \(0\), hence
\begin{equation}
\Prob(\bar{S}_n^+>S^-)
\begin{cases}
>\frac{1}{2}, & \Delta>0,\\
=\frac{1}{2}, & \Delta=0,\\
<\frac{1}{2}, & \Delta<0.
\end{cases}
\label{eq:appendix-term2-sign}
\end{equation}
Averaging Equations~\eqref{eq:appendix-term1-sign} and \eqref{eq:appendix-term2-sign} inside Equation~\eqref{eq:appendix-Jn} yields
\begin{equation}
J_n
\begin{cases}
>\frac{1}{2}, & \Delta>0,\\
=\frac{1}{2}, & \Delta=0,\\
<\frac{1}{2}, & \Delta<0.
\end{cases}
\end{equation}
This proves the claim.
\end{proof}

\subsection{Proof of Theorem~\ref{thm:variance-reduction}}
\begin{proof}
Because Gaussian distributions are closed under averaging and subtraction, we have
\begin{equation}
S^+ - \bar{S}_n^- = \Delta + \bigl(\xi^+ - \bar{\xi}_n^-\bigr),
\end{equation}
with
\begin{equation}
\xi^+ - \bar{\xi}_n^- \sim \mathcal{N}\!\left(0,\sigma_+^2+\frac{\sigma_-^2}{n}\right).
\end{equation}
Therefore,
\begin{equation}
\Prob(S^+>\bar{S}_n^-)
=
\Phi\!\left(\frac{\Delta}{\sqrt{\sigma_+^2+\sigma_-^2/n}}\right).
\label{eq:appendix-first-branch-gaussian}
\end{equation}
Similarly,
\begin{equation}
\bar{S}_n^+ - S^- = \Delta + \bigl(\bar{\xi}_n^+ - \xi^-\bigr),
\end{equation}
with
\begin{equation}
\bar{\xi}_n^+ - \xi^- \sim \mathcal{N}\!\left(0,\frac{\sigma_+^2}{n}+\sigma_-^2\right),
\end{equation}
which implies
\begin{equation}
\Prob(\bar{S}_n^+>S^-)
=
\Phi\!\left(\frac{\Delta}{\sqrt{\sigma_-^2+\sigma_+^2/n}}\right).
\label{eq:appendix-second-branch-gaussian}
\end{equation}
Combining the two branches gives
\begin{equation}
\begin{aligned}
J_n
= \frac{1}{2}\Bigg[
&\Phi\!\left(\frac{\Delta}{\sqrt{\sigma_+^2+\sigma_-^2/n}}\right) \\
&+
\Phi\!\left(\frac{\Delta}{\sqrt{\sigma_-^2+\sigma_+^2/n}}\right)
\Bigg].
\end{aligned}
\label{eq:appendix-hetero-jn}
\end{equation}

For the baseline single pairwise comparison, note that
\begin{equation}
S^+-S^- \sim \mathcal{N}\!\left(\Delta,\sigma_+^2+\sigma_-^2\right),
\end{equation}
so
\begin{equation}
J_{\mathrm{pair}} = \Phi\!\left(\frac{\Delta}{\sqrt{\sigma_+^2+\sigma_-^2}}\right).
\end{equation}
If \(n>1\), then
\begin{equation}
\sigma_+^2+\frac{\sigma_-^2}{n} < \sigma_+^2+\sigma_-^2,
\qquad
\sigma_-^2+\frac{\sigma_+^2}{n} < \sigma_-^2+\sigma_+^2.
\end{equation}
Since \(\Phi\) is strictly increasing and \(\Delta>0\), each argument in Equation~\eqref{eq:appendix-hetero-jn} is strictly larger than the argument defining \(J_{\mathrm{pair}}\), hence \(J_n>J_{\mathrm{pair}}\).

Finally, if \(\sigma_+=\sigma_-:=\sigma\), then both terms in Equation~\eqref{eq:appendix-hetero-jn} coincide and equal
\begin{equation}
\Phi\!\left(\frac{\Delta}{\sigma\sqrt{1+1/n}}\right),
\end{equation}
which is strictly increasing in \(n\) whenever \(\Delta>0\).
\end{proof}

\subsection{Proof of Theorem~\ref{thm:surrogate-grpo}}
\begin{proof}
Fix a phase \(b\in\{j,r\}\) and suppress the frozen variable \(\theta_{-b}\) for readability. Since \(L_{G,b}\) is \(L_b\)-smooth in \(\theta_b\), we have
\begin{equation}
\begin{aligned}
L_{G,b}(\theta_b^{t+1})
&\ge L_{G,b}(\theta_b^{t})
+ \eta_t\langle \nabla_{\theta_b}L_{G,b}(\theta_b^{t}), g_t^{(b)}\rangle \\
&\quad - \frac{L_b\eta_t^2}{2}\norm{g_t^{(b)}}^2.
\end{aligned}
\label{eq:appendix-phase-smooth}
\end{equation}
Conditioning on \(\theta_b^t\) and using Equation~\eqref{eq:appendix-unbiased},
\begin{equation}
\E\bigl[\langle \nabla_{\theta_b}L_{G,b}(\theta_b^{t}), g_t^{(b)}\rangle \mid \theta_b^t\bigr]
=
\norm{\nabla_{\theta_b}L_{G,b}(\theta_b^{t})}^2.
\end{equation}
Moreover, by Equation~\eqref{eq:appendix-var},
\begin{equation}
\begin{aligned}
\E\bigl[\norm{g_t^{(b)}}^2 \mid \theta_b^t\bigr]
&=
\norm{\nabla_{\theta_b}L_{G,b}(\theta_b^{t})}^2 \\
 + &\E\bigl[\norm{g_t^{(b)}-\nabla_{\theta_b}L_{G,b}(\theta_b^{t})}^2 \mid \theta_b^t\bigr]\\
&\le \norm{\nabla_{\theta_b}L_{G,b}(\theta_b^{t})}^2+\frac{\sigma_b^2}{G_b}.
\end{aligned}
\end{equation}
Substituting these identities into Equation~\eqref{eq:appendix-phase-smooth} gives
\begin{equation}
\begin{aligned}
&\E\bigl[L_{G,b}(\theta_b^{t+1}) \mid \theta_b^t\bigr] \\
&\quad\ge L_{G,b}(\theta_b^{t}) \\
&\qquad + \eta_t\!\left(1-\tfrac{L_b\eta_t}{2}\right)\norm{\nabla_{\theta_b}L_{G,b}(\theta_b^{t})}^2 \\
&\qquad - \frac{L_b\eta_t^2\sigma_b^2}{2G_b}.
\end{aligned}
\end{equation}
Because \(\eta_t\le 1/L_b\), we have \(1-\frac{L_b\eta_t}{2}\ge \frac{1}{2}\). Taking full expectation and summing over \(t=0,\ldots,T-1\),
\begin{equation}
\begin{aligned}
\frac{1}{2}\sum_{t=0}^{T-1}\eta_t \,
\E\norm{\nabla_{\theta_b}L_{G,b}(\theta_b^{t})}^2
&\le
L_{G,b}^{\star} - L_{G,b}(\theta_b^{0}) \\
&\quad+
\frac{L_b\sigma_b^2}{2G_b}
\sum_{t=0}^{T-1}\eta_t^2 .
\end{aligned}
\label{eq:appendix-phase-telescope}
\end{equation}
Under Equation~\eqref{eq:appendix-stepsize},
\begin{equation}
\sum_{t=0}^{T-1}\eta_t = \eta_0\sqrt{T},
\qquad
\sum_{t=0}^{T-1}\eta_t^2 = \eta_0^2.
\end{equation}
Dividing Equation~\eqref{eq:appendix-phase-telescope} by \(\frac{1}{2}\eta_0\sqrt{T}\) yields
\begin{equation}
\begin{aligned}
\frac{1}{T}\sum_{t=0}^{T-1}
\E\norm{\nabla_{\theta_b}L_{G,b}(\theta_b^{t})}^2
&\le
\frac{2\bigl(L_{G,b}^{\star}-L_{G,b}(\theta_b^{0})\bigr)}
{\eta_0\sqrt{T}} \\
&\quad+
\frac{L_b\eta_0\sigma_b^2}{G_b\sqrt{T}} .
\end{aligned}
\label{eq:appendix-phase-surrogate-bound}
\end{equation}
which implies the stated \(O(T^{-1/2})\) rate.
\end{proof}

\subsection{Proof of Theorem~\ref{thm:true-objective-neighborhood}}
\begin{proof}
At each step \(t\), decompose the phase-wise true gradient as
\begin{equation}
\begin{aligned}
\nabla_{\theta_b}J_b(\theta_b^t)
&=
\nabla_{\theta_b}L_{G,b}(\theta_b^t) \\
&\quad+
\Bigl(
\nabla_{\theta_b}J_b(\theta_b^t)
-
\nabla_{\theta_b}L_{G,b}(\theta_b^t)
\Bigr).
\end{aligned}
\end{equation}
Using \(\norm{u+v}^2\le 2\norm{u}^2 + 2\norm{v}^2\), we obtain
\begin{equation}
\begin{aligned}
\norm{\nabla_{\theta_b}J_b(\theta_b^t)}^2
&\le
2\norm{\nabla_{\theta_b}L_{G,b}(\theta_b^t)}^2 \\
&\quad+
2\norm{
\nabla_{\theta_b}J_b(\theta_b^t)
-
\nabla_{\theta_b}L_{G,b}(\theta_b^t)
}^2 .
\end{aligned}
\label{eq:appendix-phase-true-vs-surrogate}
\end{equation}
Take expectations and average over \(t=0,\ldots,T-1\):
\begin{equation}
\begin{aligned}
\frac{1}{T}\sum_{t=0}^{T-1}
\E&\norm{\nabla_{\theta_b}J_b(\theta_b^t)}^2
\le
\frac{2}{T}\sum_{t=0}^{T-1}
\E\norm{\nabla_{\theta_b}L_{G,b}(\theta_b^t)}^2 \\
&+
\frac{2}{T}\sum_{t=0}^{T-1}
\E\norm{
\nabla_{\theta_b}J_b(\theta_b^t)
-
\nabla_{\theta_b}L_{G,b}(\theta_b^t)
}^2 .
\end{aligned}
\label{eq:appendix-phase-average-split}
\end{equation}
The first term is controlled by Equation~\eqref{eq:appendix-phase-surrogate-bound}. For the second term, apply the gradient-gap assumption in Equation~\eqref{eq:appendix-gradient-gap} uniformly over the iterates:
\begin{equation}
\begin{aligned}
&\frac{2}{T}\sum_{t=0}^{T-1}
\E\norm{
\nabla_{\theta_b}J_b(\theta_b^t)
-
\nabla_{\theta_b}L_{G,b}(\theta_b^t)
}^2
\\&\le
\frac{2B_{\mathrm{grp}}^{(b)}}{G_b}
+
2B_{\mathrm{stale}}^{(b)}
+
2B_{\mathrm{clip}}^{(b)} .
\end{aligned}
\label{eq:appendix-phase-gap-bound}
\end{equation}
Substituting Equations~\eqref{eq:appendix-phase-surrogate-bound} and \eqref{eq:appendix-phase-gap-bound} into Equation~\eqref{eq:appendix-phase-average-split} yields
\begin{equation}
\begin{aligned}
&\frac{1}{T}\sum_{t=0}^{T-1}\E\norm{\nabla_{\theta_b}J_b(\theta_b^t)}^2
\\&\le
\frac{4\bigl(L_{G,b}^{\star}-L_{G,b}(\theta_b^{0})\bigr)}{\eta_0\sqrt{T}}
+ \frac{2L_b\eta_0\sigma_b^2}{G_b\sqrt{T}}\\
&+ \frac{2B_{\mathrm{grp}}^{(b)}}{G_b}
+ 2B_{\mathrm{stale}}^{(b)}
+ 2B_{\mathrm{clip}}^{(b)}.
\end{aligned}
\label{eq:appendix-phase-true-bound}
\end{equation}
This proves the claimed \(O(T^{-1/2})+O(G_b^{-1})\) stationary-neighborhood bound for the phase-specific true objective. Since \(G_j=2n\) and \(G_r=n\), the rollout-dependent term is \(O(n^{-1})\) in both phases.
\end{proof}

\section{Prompts}

We report the prompts used in our experiments in this section. For baseline methods, we follow the prompts provided by their official implementations and original papers.

\begin{figure*}[t]
\centering
\label{fig:generate-prompt}
\begin{tcolorbox}[
    enhanced,
    width=0.96\textwidth,
    colback=promptbg,
    colframe=promptpink,
    coltitle=white,
    title={Prompt for Rubric Generation (\name)},
    fonttitle=\bfseries\large,
    boxrule=1.1pt,
    arc=3pt,
    outer arc=3pt,
    left=8pt,
    right=8pt,
    top=8pt,
    bottom=8pt,
    toptitle=4pt,
    bottomtitle=4pt,
    lefttitle=8pt,
    righttitle=8pt,
]
\begin{Verbatim}[
    fontsize=\tiny,
    breaklines=true,
    breakanywhere=true,
    breaksymbolleft={},
    breaksymbolright={}
]
    "Your task is to extract a set of rubric-style instructions from a user's request.\n"
    "These rubrics will be used as evaluation criteria to check if a response fully meets the request.\n"
    "Every rubric item must be a universal principle. If any rubric still contains topic-specific references (e.g., names, places, myths, numbers, historical facts), it is automatically invalid.\n"
    "\n"
    "- **Two Distinct Categories:**\n"
    "  - [Hard Rule]: Derived strictly from explicit requirements stated in the <request> (format, length, structure, forbidden/required elements, etc.).\n"
    "  - [Principle]: Derived by abstracting any concrete cues into domain-agnostic quality criteria (e.g., clarity, correctness, sound reasoning, pedagogy).\n"
    "\n"
    "- **Comprehensiveness:**\n"
    "  The rubric must cover all critical aspects implied by the request and examples, including explicit requirements and implicit quality standards.\n"
    "\n"
    "- **Conciseness & Uniqueness:**\n"
    "  Each rubric must capture a distinct evaluation criterion. Overlapping or redundant criteria must be merged into a single rubric. Wording must be precise and free of repetition.\n"
    "\n"
    "- **Format Requirements:**\n"
    "  - Use a numbered list.\n"
    "  - Each item starts with \"The response\" phrased in third person.\n"
    "  - Append [Hard Rule] or [Principle] at the end of each item.\n"
    "  - Do not include reasoning, explanations, or examples in the final output—only the rubrics.\n"
    "\n"
    "Here is the request:\n"
    "{prompt}\n"
    "\n"
    "Please generate the rubrics for the above request."
\end{Verbatim}
\end{tcolorbox}
\end{figure*}

\begin{figure*}[t]
\centering
\begin{tcolorbox}[
    enhanced,
    width=0.96\textwidth,
    colback=promptbg,
    colframe=promptpink,
    coltitle=white,
    title={Prompt for Judge Generation (\name)},
    fonttitle=\bfseries\large,
    boxrule=1.1pt,
    arc=3pt,
    outer arc=3pt,
    left=8pt,
    right=8pt,
    top=8pt,
    bottom=8pt,
    toptitle=4pt,
    bottomtitle=4pt,
    lefttitle=8pt,
    righttitle=8pt,
]
\begin{Verbatim}[
    fontsize=\tiny,
    breaklines=true,
    breakanywhere=true,
    breaksymbolleft={},
    breaksymbolright={}
]
Your job is to look at a conversation and a set of rubric items, and score the last turn (i.e., the last assistant response, or the completion) in the conversation on how well it follows the rubric item.

# Conversation
<<conversation>>

# Rubric item
<<rubric_item>>

# Instructions
Return a json object. For each rubric item i (starting from 1), keys must be exactly "explanation_i" and "criteria_met_i" for each i and it includes two top-level fields in the JSON object:
- The "explanation_i" field should be a string explaining why the response does or does not meet the criteria of the rubric item.
- The "criteria_met_i" field should be a boolean indicating (true/false) whether the response meets the criteria of the rubric item. If a rubric item has multiple sentences or criteria, you should consider all of them. If any of the criteria is not met, the answer should be false. Only return true is all of the criteria are met.
- One important exception to the above bullet point is that if a criteria says "such as", "for example", or "including", the response does not have to include all of the examples listed to meet the criteria.

# Final Output Format (a single JSON object, not an array)
{
  "explanation_1": "...",
  "criteria_met_1": true/false,
  "explanation_2": "...",
  "criteria_met_2": true/false,
  ... repeat this pattern for every rubric item i in order (i = 1, 2, 3, ...)
}

# Final instruction
Return just the json object. Do not include any other text in the response.
\end{Verbatim}
\end{tcolorbox}
\end{figure*}

\appendix
\end{document}